\documentclass[journal]{IEEEtranTIE}
\usepackage{graphicx}
\usepackage{cite}
\usepackage{picinpar}
\usepackage{amsmath}
\usepackage{url}
\usepackage{flushend}
\usepackage[latin1]{inputenc}
\usepackage{colortbl}
\usepackage{subfigure}
\usepackage{soul}
\usepackage{multirow}
\usepackage{pifont}
\usepackage{color}
\usepackage{alltt}
\usepackage[hidelinks]{hyperref}
\usepackage{enumerate}
\usepackage{breakurl}
\usepackage{epstopdf}
\usepackage{pbox}
\usepackage{cancel}
\usepackage{caption2}
\newtheorem{theorem}{\bf Theorem}
\newenvironment{proof}{{\it Proof}.}{\hfill $\square$\par}
\newenvironment{assumption}{{\it Assumption}.}{}
\usepackage{amssymb}
\usepackage{amsmath,bm}
\usepackage{makecell}
\usepackage{booktabs}
\usepackage[boxed,ruled,commentsnumbered]{algorithm2e}  
\definecolor{blue-green}{rgb}{0.0, 0.87, 0.87}
\definecolor{dollarbill}{rgb}{0.52, 0.73, 0.4}
\definecolor{applegreen}{rgb}{0.55, 0.71, 0.0}
\definecolor{carnelian}{rgb}{0.7, 0.11, 0.11}
\definecolor{mygray}{gray}{.9}
\begin{document}

\title{A Learning Framework for $n$-bit Quantized Neural Networks toward FPGAs}
\author{Jun~Chen, Liang~Liu, Yong~Liu,~\IEEEmembership{Member,~IEEE}, Xianfang~Zeng
	
	\thanks{Jun~Chen, Liang Liu, Yong~Liu and Xianfang~Zeng are with the Institute of Cyber-Systems and Control, Zhejiang University, Hangzhou,
		China, 310027, e-mail: yongliu@iipc.zju.edu.cn.}% <-this % stops a space
	\thanks{Yong~Liu is the corresponding author.}% <-this % stops a space
}

\maketitle
	
\begin{abstract}
The quantized neural network (QNN) is an efficient approach for network compression and can be widely used in the implementation of FPGAs. This paper proposes a novel learning framework for $n$-bit QNNs, whose weights are constrained to the power of two. To solve the gradient vanishing problem, we propose a reconstructed gradient function for QNNs in back-propagation algorithm that can directly get the real gradient rather than estimating an approximate gradient of the expected loss. We also propose a novel QNN structure named $n$-BQ-NN, which uses shift operation to replace the multiply operation and is more suitable for the inference on FPGAs. Furthermore, we also design a shift vector processing element (SVPE) array to replace all 16-bit multiplications with SHIFT operations in convolution operation on FPGAs. We also carry out comparable experiments to evaluate our framework. The experimental results show that the quantized models of ResNet, DenseNet and AlexNet through our learning framework can achieve almost the same accuracies with the original full-precision models. Moreover, when using our learning framework to train our $n$-BQ-NN from scratch, it can achieve state-of-the-art results compared with typical low-precision QNNs. Experiments on Xilinx ZCU102 platform show that our $n$-BQ-NN with our SVPE can execute 2.9 times faster than with the vector processing element (VPE) in inference. As the SHIFT operation in our SVPE array will not consume Digital Signal Processings (DSPs) resources on FPGAs, the experiments have shown that the use of SVPE array also reduces average energy consumption to 68.7\% of the VPE array with 16-bit.
\end{abstract}

\begin{IEEEkeywords}
Deep learning, quantized neural network (QNN), deep compression, FPGA
\end{IEEEkeywords}

\markboth{IEEE TRANSACTIONS ON NEURAL NETWORKS AND LEARNING SYSTEMS}%
{}

\definecolor{limegreen}{rgb}{0.2, 0.8, 0.2}
\definecolor{forestgreen}{rgb}{0.13, 0.55, 0.13}
\definecolor{greenhtml}{rgb}{0.0, 0.5, 0.0}

\section{Introduction}

\IEEEPARstart{D}{eep} convolutional neural networks (CNNs) have substantially become the dominant Artificial Intelligence (AI) approach for a variety of computer vision tasks such as image classification \cite{Krizhevsky2012ImageNet,Szegedy2014Going,Simonyan2015VeryDC}, face recognition \cite{taigman2014deepface,sun2014deep}, semantic segmentation \cite{long2015fully,chen2014semantic} and object detection \cite{girshick2015fast,ren2015faster}. The significant accuracy improvement of CNNs brings with the cost of huge computational complexity, resource, and power consumption as it requires a comprehensive estimation of all the scopes within the feature maps \cite{russakovsky2015imagenet,deng2009imagenet}. For example, the AlexNet model is over 200 MB, and the VGG-16 model is over 500 MB \cite{russakovsky2015imagenet}. Towards such overwhelming resources and computation pressure, hardware accelerators such as GPUs, FPGAs, and ASICs have been applied to accelerate CNNs. Among these accelerators, FPGAs have emerged as one of the popular solutions when considering both the reprogramability and energy efficiency.

Implementing CNN on FPGAs is not an efficient practice due to limited resources and bandwidth. Thus QNN is a good choice for FPGAs implementation, which simultaneously gives consideration to computational efficiency, resources and classification accuracy in inference. In general, QNNs can be achieved in two ways: $1)$ An estimator is used to estimate the gradient of the expected loss to solve the problem of gradient vanishing so that QNNs can be trained from scratch with the help of this estimator. $2)$ Fine-tuning on a pretrained full-precision model obtains QNNs that bypasses the problem of gradient vanishing. Although the first method estimates a gradient, which makes it possible to train QNNs from scratch, the gradient of expected loss obtained by estimators has a noise source compared to the real gradient that causes a gap on classification accuracy between the QNNs and full-precision CNNs. The second method fine-tunes QNNs on a pretrained full-precision model that solves the problem of classification accuracy better, but a challenging factor is that the structure of QNNs is limited by the original structure of the pretrained CNNs model, and the structure of QNNs cannot be flexibly adjusted. Due to the constraints of computational resources and computational efficiency on FPGAs, it is inevitable to adjust the network structure for the hardware environment. In order to transform different CNNs into QNNs that can run efficiently on FPGAs, it is essential for a general learning framework to solve the above two challenges 
\cite{peemen2013memory,meloni2016high,sankaradas2009massively,Courbariaux2016BinarizedNN}.

In this paper, we propose a novel learning framework for $n$-bit QNNs, whose weights are constrained to the power of two $(\pm2^{-0},\pm2^{-1},\cdots,0)$. We introduce a reconstructed gradient function in back-propagation algorithm that can directly get the real gradient, rather than the estimated gradient given by estimators. Thus the QNNs trained by our framework will be more accurate. At the same time, QNNs after adjusting the structure can continue to fine-tune with our framework. The learning framework is applied to train our proposed $n$-BQ-NN, which is suitable for efficient implementation on FPGAs. We also evaluate the effectiveness of our approach on state-of-the-art networks such as ResNet~\cite{he2016deep}, DenseNet~\cite{huang2017densely} and AlexNet~\cite{Krizhevsky2012ImageNet}. The main contributions of this article are summarized as follows:
\begin{enumerate}
	\item We propose a novel learning framework for $n$-bit QNNs. In this framework, we propose a reconstructed gradient function in back-propagation algorithm, which can overcome the gradient vanishing problem during training the QNNs and can calculate the accurate gradient compared with the estimators based approaches. We achieve state-of-the-art results compared with typically low-precision QNNs.
	\item We propose a highly efficient QNN structure called $n$-BQ-NN for FPGAs. Our proposed architecture, which consists entirely of convolutional layers and implements a uniform convolution kernel, can maximize the resource utilization and improve the parallel computational efficiency on FPGAs while preserving the accuracy of QNNs.
	\item We propose a novel shift vector processing element (SVPE) array for FPGAs, which replaces the multiplication with the SHIFT operation when calculating convolution operation on FPGAs. The computational efficiency of our SVPE array can achieve a performance of 2.9 times higher than that of the VPE array in the case of the same network structure on FPGAs.
\end{enumerate}

The rest of this paper is organized as follows: Section~II summarizes related prior works on QNNs and FPGAs. Our learning framework is presented in Section~III. In Sections~IV, we demonstrate the effectiveness of our learning framework via comparable experiments. We theoretically analyse and practically test the computational efficiency of our $n$-BQ-NN using our quantization method in Section~V. The conclusion is given in Section~VI.

\section{Related work}

\subsection{Learning for QNNs}

Since the amount of the model capacity is too large, it is necessary to cut down it to perform CNNs on FPGAs, which is consistent with the purpose of deep compression. In general, deep compression can be divided into three categories, i.e., pruning, Huffman coding, and quantization. 
The pruning method will simplify the deep neural network by cutting off the network connections with small weights on the normal trained network~\cite{han2015learning}~\cite{li2016pruning}~\cite{han2015deep}. The Huffman coding method is an optimal code used for lossless data compression \cite{van1976construction} which uses entropy to encode source symbols by variable-length codewords. Han \emph{et al}. \cite{han2015deep} show that 20\% - 30\% of the network storage will be saved after Huffman coding the non-uniformly distributed values. When considering perform compressed networks on FPGAs, the network after pruning is an asymmetric structure, which is unsuitable for hardware implementation, and the Huffman coding may only be regarded as a post-compression combined with the other two compression methods, so most of the hardware accelerators will focus on the quantization method.

The quantization based method normally employ the low-precision weights, varied from 1 bit to 5 bits, to represent the CNNs~\cite{Courbariaux2016BinarizedNN,choi2018bridging,choi2018pact,liu2018bi,zhuang2018towards,zhou2017incremental,zhou2016dorefa,zhu2016trained}. Some studies train QNNs from scratch by estimating the gradient of expected loss based on Straight-Through Estimator \cite{Courbariaux2016BinarizedNN,choi2018pact,choi2018bridging,zhou2016dorefa}. For example, Courbariaux \emph{et al}. \cite{Courbariaux2016BinarizedNN} train a classification neural network from scratch with 1-bit weight and activation, which can run 7 times faster than the CNNs. Choi \emph{et al}. \cite{choi2018pact} propose a neural quantization scheme called Parameter Clipping Activation, which uses a parameter to find the optimal quantization scale for arbitrary bit width activations. Choi \emph{et al}. \cite{choi2018bridging} introduce a novel technique called Statistics-Aware Weight Binning, which finds the optimal scaling factor based on statistical characteristics of the distribution of the weights to minimize the quantization error. The QNNs trained by the above quantization methods only accelerate the inference, Zhou \emph{et al}. \cite{zhou2016dorefa} propose a DoReFa-Net that can accelerate both training and inference by low bit width weights, activations and gradients respectively. However, these estimator-based methods have a noise compared to the real gradient. Thus these QNNs can't achieve an ideal classification accuracy, especially on multi-classification datasets such as CIFAR-100.

Some other quantization methods are dedicated to design special strategies to fine-tune QNNs, which will not rely on the backpropagation algorithm and can bypass the problem of gradient vanishing~\cite{park2018precision,zhou2017incremental,zhu2016trained}. They can achieve a much better accuracy as they are independent of estimators. For example,  Park \emph{et al}. \cite{park2018precision} propose precision highway that has an end-to-end high-precision information flow for ultra-low-precision computation. This linear weight quantization method is based on the assumption that the weight distribution is the Laplace distribution. Recently, Zhou \emph{et al}. \cite{zhou2017incremental} propose an incremental network quantization method, which converts pretrained full-precision CNNs model into a low-precision model where the weights are constrained to the power of two or zero. It has been studies that there will be little loss on the classification accuracies when using 2-5 bits low-precision weight \cite{zhou2016dorefa,zhou2017incremental}. However, these quantization methods will depend on the pretrained network structure rather than the backpropagation algorithm, which will be difficult to satisfy the network-structure-optimization requirements due to the hardware limitation.

\subsection{CNNs Implemented by FPGAs}

Considering the inference, the CNNs have a highly hierarchical structure of multiple feature maps, whose structure exposes a large amount of parallelism that makes CNNs very suitable for FPGAs implementation. This structure builds on the accumulation of a huge number of convolutions that will consume a huge number of floating-point resources on FPGAs. In addition, the structure of CNNs often contains many convolutional layers. Thus the convolution module with different parameters needs to be executed iteratively during the inference. Frequent execution of data caching and parameter loading will be limited by the bandwidth. Therefore, in many studies, their hardware structures of CNNs are designed mainly for the two bottlenecks of floating-point resources and bandwidth~\cite{peemen2013memory,huang2017densely,he2016deep,meloni2016high}.

In terms of optimizing for floating-point resources, Lu \emph{et al}. \cite{lu2017evaluating} design a fast Winograd algorithm, which can decrease the use of floating-point resources on FPGAs and reduces the complexity of convolution dramatically. Simultaneously, they also give the formula for estimating the computational efficiency, which demonstrates that the fast Winograd algorithm is more efficient than conventional convolutional algorithm due to the use of fewer floating-point resources on FPGAs. Meloni \emph{et al}. \cite{meloni2016high} present an accelerator configuration for CNNs that reaches more than 97\% DSP resource utilization at 150 MHz operating frequency with 16-bit precision. And they show that the floating-point resource utilization is the highest when executing 3$\times$3 filters on FPGAs.

Other studies have focused on optimizing the data scheduling structure to reduce the impact of the bandwidth. For example, Sankaradas \emph{et al}. \cite{sankaradas2009massively} implement a vector processing element (VPE) array coprocessor, which can accelerate the CNNs by optimizing the cache between distributed off-chip memory banks and on-chip computing elements on FPGAs. Peemen \emph{et al}. \cite{peemen2013memory} show that their scheduler prefers to use only convolutional layers without fully connected layers on FPGAs, which can maximize the efficiency of on-chip memories by reducing the impact of the bandwidth bottleneck.

The crucial issue with the above methods is that they usually only consider the bottleneck at a single level and fail to coordinate these two constraints to improve the computational efficiency of the hardware accelerators. In this paper, we reduce the impact of the above two constraints by introducing the QNNs into FPGAs, which provides a new idea to deal with the above two bottlenecks. Since the weights in our QNNs are quantized to the power of two, the quantized weights directly reduce the bandwidth required to load the weights. In addition, the use of the quantized weights can translate the multiplication into shifting in convolution module, which greatly reduces the use of floating-point resources.

\section{$n$-BQ-NN}
\subsection{Fundamental Idea of Our $n$-BQ-NN}

\begin{figure}[h]
	\centering
	\includegraphics[width=.5\textwidth]{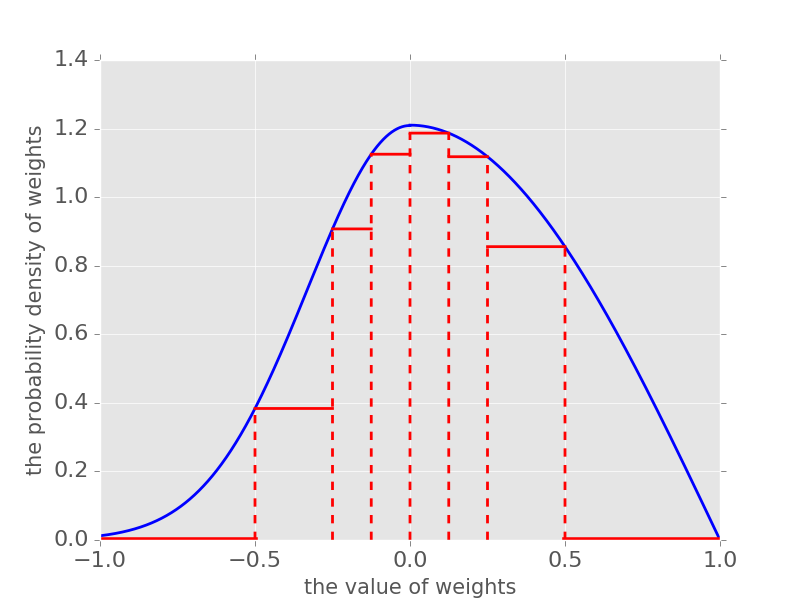}
	\caption{For any full-precision weight distribution (indicated by the blue curve in the figure) trained by CNNs model, non-uniform sampling can be used to approximate the full-precision distribution which represented by the red curve in the figure.}
	\label{sample} 
\end{figure}

The main idea of our $n$-BQ-NN is based on \textbf{Fig}.\ref{sample}, which shows the information loss led by the quantization method with the power of $2$ can be interpreted as the sampling loss caused by non-uniform sampling. In fact, the weights of CNNs with large absolute values will be dominant to the overall classification accuracy of the networks, although these weights with large values only account for a small ratio among all the weights~\cite{deng2009imagenet,ioffe2015batch}. For an arbitrary probability density function of the weights in a neural network, denoted as $\phi(x)$, we can use the blue curve in \textbf{Fig}.\ref{sample} to represent $\phi(x)$ which meets $\int_{-1}^{1} \phi(x) dx = 1$. In this way, we can calculate the sampling loss $\Phi(x)$ which can be represented as the area between two distributions (red and blue curves) in \textbf{Fig}.\ref{sample}. By calculating, the sampling loss $\Phi(x)$ is represented as the following recursive formula, where $n$ is the quantized bit width of the weights.
\begin{equation} 
\begin{split}
\left\{ 
\begin{array}{cll} 
\Phi(1) = 1 - \phi(2^{-1}) & n=1 \\ 
\Phi(n) = \Phi(n-1) + 2^{1-n}\phi(2^{1-n}) \\ - 2^{1-n}\phi(2^{-n}) & n>1
\end{array} \right. 
\end{split}
\label{note}
\end{equation}

It can be seen from the above formula that the sampling loss always decreases as the increasing of quantized bit width of the weights, which indicates that the sampling loss is negatively related to the quantized bit width of weights. However, the bit width is limited and needs to reduce as much as possible in QNNs. Thus, finding the best balance between quantized bit width and the sampling loss is the key to balancing the performance, speed, and resources of QNNs. We define the $\mathcal{L}(n) = 2^{1-n}[\phi(2^{1-n}) - \phi(2^{-n})]$ as the variation between two sampling losses, $\Phi(n)$ and $\Phi(n-1)$, from \textbf{Eq}.\ref{note}. Then, we can prove that $\mathcal{L}(4)$ will approach to zero in our quantization method with the power of $2$, which can be ensured by the \textbf{Theorem}~\ref{thm1}. Therefore, continuing to increase the quantized bit width of $n$ after $3$ is not helpful to decrease the sampling loss.

\begin{theorem}
\label{thm1}
$0 < \left| \mathcal{L}(4) \right| < 7.8\times10^{-3}$
\end{theorem}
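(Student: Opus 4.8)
The plan is to specialize the definition of $\mathcal{L}$ to $n=4$ and then control the two density values it contains. Substituting $n=4$ into $\mathcal{L}(n) = 2^{1-n}[\phi(2^{1-n}) - \phi(2^{-n})]$ gives
\begin{equation}
\mathcal{L}(4) = 2^{-3}\bigl[\phi(2^{-3}) - \phi(2^{-4})\bigr] = \tfrac{1}{8}\bigl[\phi(\tfrac{1}{8}) - \phi(\tfrac{1}{16})\bigr],
\end{equation}
so the whole statement reduces to estimating the single density gap $\phi(1/8) - \phi(1/16)$ over the short interval $[1/16,1/8]$.

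For the strict lower bound $0 < |\mathcal{L}(4)|$, I would use the qualitative shape of $\phi$ fixed by Fig.~\ref{sample}: $\phi$ is a symmetric, unimodal density peaked at the origin and strictly decreasing on $(0,1)$. Since $1/16 < 1/8$ both lie in this decreasing region, $\phi(1/8) < \phi(1/16)$, the bracket is strictly negative and in particular nonzero, whence $|\mathcal{L}(4)| > 0$. This also fixes the sign $\mathcal{L}(4) < 0$, consistent with the earlier claim that the sampling loss keeps decreasing with $n$.

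For the upper bound I would first rewrite the gap by the mean value theorem, $\phi(1/8) - \phi(1/16) = \phi'(\xi)\,(1/8 - 1/16) = \phi'(\xi)/16$ for some $\xi \in (1/16,1/8)$, so that $|\mathcal{L}(4)| = |\phi'(\xi)|/128$. Since $7.8\times10^{-3}$ is essentially $1/128$, the target inequality is equivalent to $|\phi'(\xi)| < 1$ on this small interval near the peak. To make this quantitative I would adopt the bell-shaped model indicated by the figure, taking $\phi$ Gaussian and fixing its scale from the normalization $\int_{-1}^{1}\phi = 1$ (equivalently, the three-sigma rule placing the effective support on $[-1,1]$, i.e. $\sigma = 1/3$). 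Evaluating $\phi(1/8)$ and $\phi(1/16)$ for this density and forming $\tfrac{1}{8}[\phi(1/8) - \phi(1/16)]$ yields a value just under $7.8\times10^{-3}$, which closes the estimate.

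The main obstacle is that the numerical upper bound is genuinely a property of the assumed density rather than of a truly arbitrary $\phi$: for a sufficiently sharply peaked distribution the gap $\phi(1/16) - \phi(1/8)$, and with it $|\phi'(\xi)|$, can be made arbitrarily large, so the inequality would fail without a scale constraint. The substantive step is therefore to justify the scale of $\phi$ (via the normalization and the bounded support $[-1,1]$) so that $|\phi'|$ stays below $1$ throughout $[1/16,1/8]$; once the density and its scale are pinned down, the remaining evaluation is routine. Everything else---the reduction to a single interval and the sign argument---is immediate from the definition of $\mathcal{L}$ and the unimodality shown in Fig.~\ref{sample}.
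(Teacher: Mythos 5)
Your reduction of $\mathcal{L}(4)$ to $\tfrac{1}{8}\bigl[\phi(1/8)-\phi(1/16)\bigr]$ and then, via the mean value theorem, to $|\phi'(\xi)|/128$ follows the same skeleton as the paper's argument: the paper Taylor-expands $\phi$ at $0$ with a Peano remainder and arrives at $\mathcal{L}(n)\approx-\ln 2\,\phi'(0)\,n\,2^{1-2n}$, hence $\mathcal{L}(4)\approx-\ln 2\,\phi'(0)/32$, so in both cases the theorem collapses to showing that a derivative of $\phi$ near the origin is small enough (roughly $|\phi'|<1$ in your normalization, $\ln 2\,\phi'(0)<\tfrac{1}{4}$ in the paper's). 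The strict positivity and the sign of $\mathcal{L}(4)$ are handled the same way in both, from the monotone decrease of $\phi$ away from its peak.

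Where you genuinely diverge is at the step you yourself flag as the main obstacle. You close the upper bound by declaring $\phi$ Gaussian with $\sigma=1/3$ and evaluating numerically; that proves the inequality for one particular density, not for the arbitrary $\phi$ the theorem quantifies over, and, as you correctly observe, a sufficiently sharply peaked $\phi$ (still normalized on $[-1,1]$) violates the bound outright. The paper does not use a parametric model: it extracts the missing scale constraint from the normalization of the sampling loss itself, arguing that $0<\Phi(n)<1$ forces $0<|\mathcal{L}(2)|<1$ and thence $0<\ln 2\,\phi'(0)<\tfrac{1}{4}$, which substituted into the expansion gives $|\mathcal{L}(4)|<\tfrac{1}{128}$. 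So the paper's answer to your obstacle is a bound on $\phi'(0)$ derived from properties of $\Phi$, not a Gaussian calibration. You should not, however, regard the paper's version of this step as airtight either: from $\mathcal{L}(2)\approx-\tfrac{1}{4}\ln 2\,\phi'(0)$, the inequality $|\mathcal{L}(2)|<1$ only yields $|\ln 2\,\phi'(0)|<4$, not $<\tfrac{1}{4}$, so the quantitative constant there rests on an implication used in the wrong direction. Your diagnosis that the statement needs an explicit regularity or scale hypothesis on $\phi$ to hold in the claimed generality is accurate; the substantive difference is that the paper attempts (imperfectly) to derive that hypothesis from $0<\Phi(n)<1$, whereas you import it through a specific distributional model, which narrows the scope of what you actually prove.
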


\begin{proof}
We use Taylor expansion with Peano residuals to represent the probability density function $\phi(x)$,
\begin{equation} 
\begin{split}
& \phi(2^{-n}) = \phi(0) - ln2 \phi'(0) n 2^{-n} + o(n 2^{-n}) \\
& \phi(2^{1-n}) = \phi(0) - ln2 \phi'(0) n 2^{1-n} + o(n 2^{1-n}) \\
\end{split}
\label{taylor}
\end{equation}
Substituting \textbf{Eq}.\ref{taylor} into $\mathcal{L}(n)$, we get
\begin{equation} 
\begin{split}
& \mathcal{L}(n) = 2^{1-n}[\phi(2^{1-n}) - \phi(2^{-n})] \\
& = 2^{1-n}[ln2 \phi'(0) n 2^{-n} - \cancel{o(n 2^{-n})} - ln2 \phi'(0) n 2^{1-n} \\
& + \cancel{o(n 2^{1-n})}] \\
& \approx - ln2 \phi'(0) n 2^{1-2n}
\end{split}
\label{error}
\end{equation}
Since $0 < \Phi(n) < 1$, we deduce that $0 < \left| \mathcal{L}(2) \right| < 1$ and $0 < ln2 \phi'(0) < \frac{1}{4}$. In final, we get $0 < \left| \mathcal{L}(4) \right| < \frac{1}{128}$ by substituting the range of $ln2 \phi'(0)$ into $\mathcal{L}(4)$
due to the \textbf{Eq}.\ref{error}.
\end{proof}

From the hardware perspective, the resource consumption of SHIFT operation is much less than multiplication, so our intention is to use the SHIFT operation instead of multiplication. Considering that the shift right operation will make the weights exceed the constraint range of ($-1$,$1$), thus, all SHIFT operations are shift left and every quantized weight is chosen from the entries $(\pm2^{-0},\pm2^{-1},\cdots,\pm2^{-i},0)$, where $\pm2^{-i}$ indicates its multiplication can be calculated by $<<i$ and $0$ indicates that no operations are required. Our $n$-BQ-NN quantizes the weights to the entries, which are encoded to $n$-bit and suitable for hardware computation. Under such circumstance, the staircase function $\operatorname{staircase}(W)$ can be used to describe our $n$-bit quantized weights as \textbf{Eq}.\ref{eq1} (typically, $n$ is greater than $1$, and $\operatorname{staircase}(W)$ is degraded to $\operatorname{sign}(W)$ if $n$ is equal to $1$), where $W$ are full-precision weights.
\begin{equation} 
\operatorname{staircase}(W)=\left\{ 
\begin{array}{ccc} 
2^{-i} \operatorname{sign}(W) & & \Delta_{i+1} \leq |W| < \Delta_i \\ 
0 & & |W| < \Delta_r
\end{array} \right. 
\label{eq1}
\end{equation}
Here $i$ is taken from $r-1$ to $0$ in turn, where $r = 2^{n-1} - 1$ and $\operatorname{sign}(W)$ is the sign function:
\begin{equation} 
\operatorname{sign}(W)=\left\{ 
\begin{array}{clc} 
+1 & & W \geq 0 \\ 
-1 & & W < 0
\end{array} \right. 
\end{equation}

%-------------------------------------------------------------------------
\subsection{Gradients Computation in $n$-BQ-NN}

In order to facilitate the discussion as follows, we need to define some variables first, where $W^l_{jk}$ represents the weight that connects the $k$-th neuron of the $(l-1)$-th layer to the $j$-th neuron of the $l$-th layer, $b^l_j$ represents the bias of the $j$-th neuron of the $l$-th layer, $z^l_j$ represents the input of the $j$-th neuron of the $l$-th layer ($z^l_j = \sum_k W^l_{jk} a^{l-1}_k + b^l_j$), $a^l_j$ represents the output of the $j$-th neuron of the $l$-th layer ($a^l_j = \theta(z^l_j)$), and $\theta$ is activation function.

We also have to add a extra quantized weight so that we can train our $n$-BQ-NN, where the quantized weight is shown as follows,
\begin{equation} 
\hat{W}^l_{jk} = \operatorname{staircase}(W^l_{jk})
\label{eq9}
\end{equation}
And the cost function of mini-batch of $m$ samples in our $n$-BQ-NN is,
\begin{equation} 
C=\frac{1}{2 m} \sum_{x}\left\|y(x)-a^{L}(x)\right\|^{2}
\label{eq4}
\end{equation}
Where $x$ is the input sample, $y$ is the actual classification, $a^L$ is the prediction output, and $L$ is the maximum number of layers in the network.

By defining $\mathcal{T}_{j}^{l} \equiv \frac{\partial C}{\partial z_{j}^{l}}$ as the error produced by the $j$-th neuron of the $l$-th layer, we can use the back-propagation algorithm to calculate the gradient and update the parameters according to the following three steps\footnote{$\odot$ represents the Hadamard product that is used for point-to-point product between matrices or vectors.}.

\begin{enumerate}
	\item[-]{Calculating the error of the last layer of the network.
	\begin{equation}
	\begin{aligned} &\mathcal{T}_{j}^{L}=\frac{\partial C}{\partial z_{j}^{L}}=\frac{\partial C}{\partial a_{j}^{L}} \cdot \frac{\partial a_{j}^{L}}{\partial z_{j}^{L}} \\ 
	&{\bm{\mathcal{T}}^{L}=\frac{\partial C}{\partial \textbf{a}^{L}} \odot \frac{\partial \textbf{a}^{L}}{\partial \textbf{z}^{L}}=\nabla_{\textbf{a}} C \odot \theta^{\prime}\left(\textbf{z}^{L}\right)}\end{aligned} 
	\label{backpropagation1}
	\end{equation}
}
	\item[-]{Calculating the error of each layer of the network from the back to the front.
	\begin{equation}
	\begin{aligned} \mathcal{T}_{j}^{l}=\frac{\partial C}{\partial z_{j}^{l}} &=\sum_{k} \frac{\partial C}{\partial z_{k}^{l+1}} \cdot \frac{\partial z_{k}^{l+1}}{\partial a_{j}^{l}} \cdot \frac{\partial a_{j}^{l}}{\partial z_{j}^{l}} \\ &=\sum_{k} \mathcal{T}_{k}^{l+1} \cdot \frac{\partial\left(\hat{W}_{k j}^{l+1} a_{j}^{l}+b_{k}^{l+1}\right)}{\partial a_{j}^{l}} \cdot \theta^{\prime}\left(z_{j}^{l}\right) \\ &=\sum_{k} \mathcal{T}_{k}^{l+1} \cdot \hat{W}_{k j}^{l+1} \cdot \theta^{\prime}\left(z_{j}^{l}\right) \\ \bm{\mathcal{T}}^{l}=&\left(\left(\hat{\textbf{W}}^{l+1}\right)^{T} \bm{\mathcal{T}}^{l+1}\right) \odot \theta^{\prime}\left(\textbf{z}^{l}\right) \end{aligned}
	\label{backpropagation3}
	\end{equation}
}
	\item[-]{Calculating the gradient of weight and bias respectively.
	\begin{equation}
	\begin{aligned}
	g^b&=\frac{\partial C}{\partial b_{j}^{l}}=\frac{\partial C}{\partial z_{j}^{l}} \cdot \frac{\partial z_{j}^{l}}{\partial b_{j}^{l}} \\ &=
	\mathcal{T}_{j}^{l} \cdot \frac{\partial\left(\hat{W}^{l}_{jk} a_{k}^{l-1}+b_{j}^{l}\right)}{\partial b_{j}^{l}}=\mathcal{T}_{j}^{l}
	\end{aligned}
	\end{equation}
	\begin{equation}
	\begin{aligned}
	g^W&=\frac{\partial C}{\partial W_{j k}^{l}}=\frac{\partial C}{\partial z_{j}^{l}} \cdot \frac{\partial z_{j}^{l}}{\partial W_{j k}^{l}}=\mathcal{T}_{j}^{l} \cdot \frac{\partial\left(\hat{W}_{j k}^{l} a_{k}^{l-1}+b_{j}^{l}\right)}{\partial W_{j k}^{l}} \\ &=\mathcal{T}_{j}^{l} \cdot \frac{\partial\left(\hat{W}_{j k}^{l} a_{k}^{l-1}+b_{j}^{l}\right)}{\partial \hat{W}_{j k}^{l}} \cdot \frac{\partial \hat{W}_{j k}^{l}}{\partial W_{j k}^{l}} \\
	&=\mathcal{T}_{j}^{l} \cdot a^{l-1}_k \cdot \frac{\partial \hat{W}_{j k}^{l}}{\partial W_{j k}^{l}}
	\label{backpropagation2}
	\end{aligned}
	\end{equation}
}
\end{enumerate}
\begin{figure}[htbp]
	\centering
	\includegraphics[width=.5\textwidth]{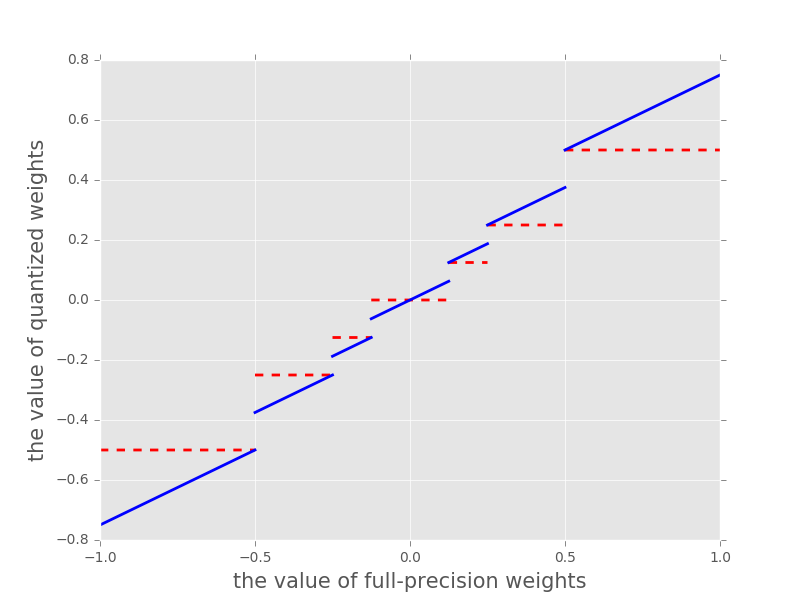}
	\caption{The red dotted line represents the staircase function, and the blue full line represents our reconstructed function.}
	\label{function} 
\end{figure}

In the above process of deriving the entire back-propagation, except for the gradient of weight of the last step, the other steps are well-defined. Based on the \textbf{Eq}.\ref{backpropagation2}, the gradient of weight can be calculated as follows,
\begin{equation} 
g^W = \mathcal{T}_{j}^{l} \cdot a^{l-1}_k \cdot \frac{\partial \hat{W}_{j k}^{l}}{\partial W_{j k}^{l}} = 0
\label{eq5}
\end{equation}
Where $\frac{\partial \hat{W}_{j k}^{l}}{\partial W_{j k}^{l}}$ is exactly the $\operatorname{staircase}'(W_{j k}^{l})$, which is the derivative of $\operatorname{staircase}(W_{j k}^{l})$. And this derivative satisfies the conditions of Dirac Delta Function $\delta(x)$. According to the properties of $\delta(x)$, $\frac{\partial \hat{W}_{j k}^{l}}{\partial W_{j k}^{l}}$ can be calculated as follows,
\begin{equation} 
\frac{\partial \hat{W}_{j k}^{l}}{\partial W_{j k}^{l}} = \delta(W_{j k}^{l}) = 0
\end{equation}
Substituting $\frac{\partial \hat{W}_{j k}^{l}}{\partial W_{j k}^{l}} = 0$ into \textbf{Eq}.\ref{eq5}, we discover that model cannot be trained by back-propagation algorithm due to gradient vanishing.

To resolve the above problem, we reconstruct the quantized weight function as \textbf{Eq}.\ref{eq6} to ensure that the weights can be updated by using the back-propagation algorithm as shown in \textbf{Fig}.\ref{function}, the blue full line, where $\alpha$ is an adjustable parameter in the range of $(0,1)$.
\begin{equation} 
\tilde{W}^l_{jk} = (1-\alpha) \hat{W}^l_{jk} + \alpha W^l_{jk}
\label{eq6}
\end{equation}

By substituting \textbf{Eq}.\ref{eq6} into \textbf{Eq}.\ref{backpropagation2}, we can recalculate the gradient of weight as follows again with $\frac{\partial \tilde{W}_{j k}^{l}}{\partial W_{j k}^{l}} = \alpha + (1-\alpha) \delta(W^l_{jk}) = \alpha$,
\begin{equation} 
g^W = \mathcal{T}_{j}^{l} \cdot a^{l-1}_k \cdot \frac{\partial \tilde{W}_{j k}^{l}}{\partial W_{j k}^{l}} = \alpha \mathcal{T}_{j}^{l} \cdot a^{l-1}_k
\label{eq8}
\end{equation}

At this point, we have reconstructed the quantized weight function as \textbf{Eq}.\ref{eq6} to solve the gradient vanishing, but the weights cannot be quantized to the entries $(\pm2^{-0},\pm2^{-1},\cdots,0)$ directly as \textbf{Eq}.\ref{eq9}. However, we can prove that the reconstructed quantized weight function will approximate to the entries after several iterations, which can be ensured by the \textbf{Theorem}~\ref{thm2}.

\begin{assumption}
Since the algorithm needs to be iterated, our problem needs to be discussed within the framework of the series. We define $W^l_{jk}$ as an iteration of $x_n$, $\tilde{W}^l_{jk}$ is equivalent to $x_{n+1}$, and the value of $a_j$ is chosen from $\hat{W}^l_{jk}=\operatorname{staircase}(W^l_{jk}) = (\pm2^{-0},\pm2^{-1},\cdots,\pm2^{-i},0)$.
\end{assumption}

\begin{theorem}
\label{thm2}
In the framework of the series, $\tilde{W}^l_{jk}$ will approach to $\hat{W}^l_{jk}$ when the number of iterations is sufficient, when $n$ is the number of iterations.
\end{theorem}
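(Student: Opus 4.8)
The plan is to recast the reconstruction rule \textbf{Eq}.\ref{eq6} as a scalar recurrence and show it contracts onto the quantized value. Writing $x_n$ for the current full-precision weight $W^l_{jk}$ and $x_{n+1}$ for the reconstructed weight $\tilde{W}^l_{jk}$, \textbf{Eq}.\ref{eq6} becomes
\begin{equation}
x_{n+1} = (1-\alpha)\,a_n + \alpha\,x_n, \qquad a_n = \operatorname{staircase}(x_n),
\label{recur}
\end{equation}
where the target $a_n$ is an entry of $(\pm2^{-0},\pm2^{-1},\cdots,\pm2^{-i},0)$ and $\alpha\in(0,1)$. The guiding idea is that each iteration is a convex combination that drags $x_n$ toward its own quantized target, so the sequence should collapse onto that target geometrically.

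First I would show that the quantization target eventually stops changing. By the definition of $\operatorname{staircase}$ in \textbf{Eq}.\ref{eq1}, the value $a = 2^{-i}\operatorname{sign}(x_n)$ assigned on the bin $\Delta_{i+1}\le|x_n|<\Delta_i$ lies inside that bin. Since the bin is an interval and $x_{n+1}$ in \textbf{Eq}.\ref{recur} is a convex combination of the interior point $a$ and $x_n$, the update cannot leave the bin; hence $a_{n+1}=a_n$. Thus after the first step the target is a constant $a=\hat{W}^l_{jk}$, and I only have to analyse a fixed-target linear recurrence.

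Next I would track the error $e_n = x_n - a$. Subtracting $a$ from both sides of \textbf{Eq}.\ref{recur} and using $a_n\equiv a$ gives $e_{n+1}=\alpha\,e_n$, so $e_n=\alpha^{\,n}e_0$. Because $\alpha\in(0,1)$ we have $\alpha^{\,n}\to 0$, whence $e_n\to 0$ and $x_n\to a$. Equivalently $\tilde{W}^l_{jk}=x_{n+1}\to\hat{W}^l_{jk}$, which is exactly the claim, and in fact delivers an explicit geometric convergence rate $\alpha$.

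The main obstacle is the bin-invariance step: one must verify that the staircase target truly lies strictly inside its own interval, so that no iterate can cross a threshold $\Delta_i$ and switch targets. This hinges on the placement of the thresholds $\Delta_i$ relative to the entries $2^{-i}$, which the statement of \textbf{Eq}.\ref{eq1} leaves implicit. The boundary cases, such as an iterate landing exactly on a threshold or starting in the zero bin $|x_n|<\Delta_r$ whose target is $a=0$, need separate handling, since there the contraction argument degenerates to the trivial fixed point. Once the target is pinned down, the remaining algebra is immediate.
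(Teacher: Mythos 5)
Your proposal is correct, and its convergence core is the same as the paper's: the paper multiplies the $k$-th recurrence equation by $\alpha^{n-k}$ and telescopes to obtain $x_{n+1}-\alpha^{n-1}x_1=a_j(1-\alpha^{n-1})$, which is exactly your error recurrence $e_{n+1}=\alpha e_n$, $e_n=\alpha^{n}e_0$ written in closed form; both then invoke $\alpha\in(0,1)$ to conclude $x_{n+1}\to a_j$. Your subtract-the-fixed-point derivation is the cleaner of the two, but it buys nothing new there. Where you genuinely go beyond the paper is the bin-invariance step: the paper's system \textbf{Eq}.\ref{series} silently writes the \emph{same} $a_j$ on the right-hand side of every equation, i.e.\ it assumes without comment that the quantization target never changes across iterations, whereas you identify this as the load-bearing assumption and sketch why it holds (the convex combination of $x_n$ with an interior point of its own bin cannot leave the bin). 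You are also right that this hinges on the unspecified placement of the thresholds $\Delta_i$; for the standard nearest-power-of-two choice $\Delta_i=\tfrac{3}{2}\cdot 2^{-i}$ one indeed has $\Delta_{i+1}\le 2^{-i}<\Delta_i$, so the entry lies strictly inside its bin and your argument closes, with the zero bin handled trivially since $0$ is a fixed point of the update. So: same route, but your version makes explicit (and essentially repairs) a hypothesis the paper's proof leaves implicit. One caveat applies to both proofs equally: in the actual training loop the weight also receives a gradient update between reconstructions, so the constant-target idealization is only valid in the stated ``framework of the series,'' not for \textbf{Algorithm}~\ref{al} verbatim.
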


\begin{proof}
The general terms of series $x$ from $1$ to $n$ are written as follows based on \textbf{Eq}.\ref{eq6},

\begin{equation} 
\left\{\begin{array}{ccc}
& x_2 - \alpha x_1 = (1-\alpha) a_j& (1) \\
& \vdots & \vdots \\
& x_n - \alpha x_{n-1} = (1-\alpha) a_j & (n-1) \\
& x_{n+1} - \alpha x_n = (1-\alpha) a_j & (n) \\
\end{array} \right.
\label{series}
\end{equation}
We let $\alpha^{(n-1)} \times (1) + \cdots + \alpha \times (n-1) + (n)$, then we get the equation as follows, 

\begin{equation}
\begin{aligned} 
x_{n+1} - \alpha^{(n-1)} x_1 & = (1-\alpha) a_j(1 + \alpha + \cdots + \alpha^{(n-1)}) \\
& = a_j(1-\alpha^{(n-1)})
\end{aligned}
\end{equation} 

\begin{figure*}[t]
	\centering
	\includegraphics[width=1.0\textwidth]{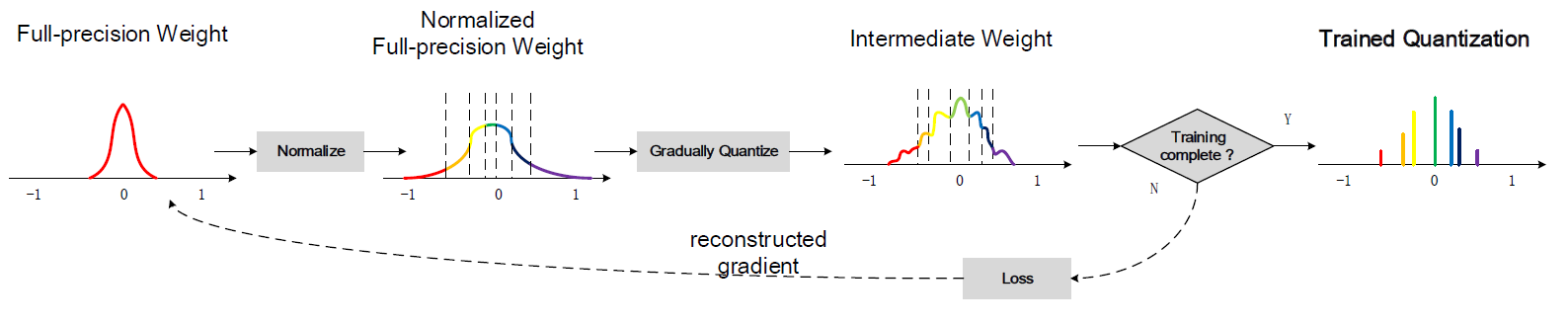}
	\caption{Overview of our trained quantization procedure.}
	\label{procedure} 
\end{figure*}

As the number of iterations increases, $x_{n+1}$ will approach $a_j$. With the guarantee of \textbf{Theorem}~\ref{thm2}, the above equation can be rewritten as $\tilde{W}^l_{jk} = \operatorname{staircase}(W^l_{jk})$ (namely, $x_{n+1}=a_j$) when the number of iterations is enough ($n \to \infty$) and $\alpha$ is in the range of $(0,1)$. In the actual algorithm implementation, it is only necessary to iterate through several steps following the training process, and the networks can be quantized completely as \textbf{Eq}.\ref{eq9}.
\end{proof}

The design of $\alpha$ in our reconstructed quantized weight function takes three aspects into consideration: First, the designed function must satisfy the \textbf{Theorem}~\ref{thm2}. Second, our reconstructed function indicates that the ratio of $1-\alpha : \alpha$ between quantized weights and full-precision weights can be used to adjust the information ratio of between quantized weights and full-precision weights in the training process. Third, on the other hand, $\alpha$ is the slope of our reconstructed function shown as the blue full line in \textbf{Fig}.\ref{function}, which can be used to change the gradient descent rate of back-propagation based on \textbf{Eq}.\ref{eq8} during the training.

\subsection{Posterior-Distribution Adjustment}

In the initialization of the networks, the initialization modes MSRA and Xavier \cite{he2015delving} which will adjust variance based on the number of inputs are prone to converge than the traditional Gaussian distribution initialization mode with fixed variance in DNNs. Inspiring by this fact, we suspect that adjusting the distribution of quantized weights may make it easier for us to train our $n$-BQ-NN. Here, we consider that full-precision networks are prone to converge than quantized networks; thus, we prefer to keep the distribution of quantized weights consistent with full-precision weights'. Comparing the probability density function before quantization $\phi(x)$ (its corresponding expectation and variance are $E(x)$ and $Var(x)$, respectively) and the probability density function after quantization $\operatorname{staircase}(x)$ (as \textbf{Eq}.\ref{eq1}), we make their expectation and variance equal respectively so that their distribution is consistent as follows,
\begin{equation} 
\left\{ 
\begin{array}{lcl} 
E(x) = \int_{-1}^{1} x \operatorname{staircase}(x) dx \\
Var(x) = \int_{-1}^{1} (x - E(x))^2 \operatorname{staircase}(x) dx
\end{array} \right. 
\label{eq10}
\end{equation}

The original full-precision probability density function $\phi(x)$ and the value of quantized weight function $\operatorname{staircase}(x)$ are fixed, so we can only adjust the value range of $\operatorname{staircase}(x)$ to meet \textbf{Eq}.\ref{eq10}.

\subsection{The Training Algorithm for $n$-BQ-NN}

In actual training algorithm for $n$-BQ-NN, the Batch Normalization (BN \cite{ioffe2015batch}) is added in our $n$-BQ-NN because it is conducive to reduce the overall impact of the weight scale and accelerate the training. Thus, we will derive the back-propagation algorithm for $n$-BQ-NN with BN and give the training algorithm in this subsection.

First, we define four variables of BN, where $\sigma$ represents the variance of all samples of a batch, $\mu$ represents the sample mean, and $\gamma,\beta$ are the scale variation coefficients. Due to the existence of BN, the bias term can be ignored, so the input of the neuron is re-expressed as $z^l_j = \sum_k W^l_{jk} a^{l-1}_k$, the normalized input of the neuron is $\hat{z}_j=\gamma \frac{z_j-\mu}{\sigma}+\beta$, and the output of the neuron is $a_j = \theta(\hat{z}_j)$. Then, we can calculate the error and the gradient, based on the discussion of Section III-B, according to the following three steps.

	\begin{enumerate}
		\item[-]{Counting the mean and variance of the sample, and calculating the gradient of them.
			\begin{equation}
			\begin{aligned} &\mu=\frac{1}{m}\sum_{j=1}^{m} z_j \\ 
			&\sigma^2=\frac{1}{m}\sum_{j=1}^{m} (z_j-\mu)^2\end{aligned} 
			\end{equation}
			\begin{equation}
			\begin{aligned} \frac{\partial C}{\partial \sigma^2} &=\sum_{k} \frac{\partial C}{\partial a_{k}} \frac{\partial a_{k}}{\partial \hat{z}_{k}} \frac{\partial \hat{z}_{k}}{\partial \sigma^2} \\ &=-\frac{1}{2} \gamma \sigma^{-3} \sum_{k} \frac{\partial C}{\partial a_k} \theta^{\prime}\left(z_{k}\right) \left(z_{k}-\mu\right) \end{aligned}
			\end{equation}
			\begin{equation}
			\begin{aligned} \frac{\partial C}{\partial \mu} &=\sum_{k} \frac{\partial C}{\partial a_{k}} \frac{\partial a_{k}}{\partial \hat{z}_{k}} \frac{\partial \hat{z}_{k}}{\partial \mu}+\frac{\partial C}{\partial \sigma^2} \frac{\partial \sigma^2}{\partial \mu} \\ &=-\frac{\gamma}{\sigma} \sum_{k} \frac{\partial C}{\partial a_{k}} \theta^{\prime}\left(z_{k}\right)-\frac{2}{m}\frac{\partial C}{\partial \sigma^2} \sum_{k}\left(z_{k}-\mu\right) \end{aligned}
			\end{equation}
		}
		\item[-]{Calculating the error of the network.
			\begin{equation}
			\begin{aligned} &\mathcal{T}_{j}=\frac{\partial C}{\partial z_{j}} = \frac{\partial C}{\partial a_{j}} \frac{\partial a_{j}}{\partial \hat{z}_{j}} \frac{\partial \hat{z}_{j}}{\partial z_{j}}+\frac{\partial C}{\partial \sigma^2}\frac{\partial \sigma^2}{\partial z_{j}}+\frac{\partial C}{\partial \mu}\frac{\partial \mu}{\partial z_{j}} \\ &=\frac{\gamma}{\sigma}\frac{\partial C}{\partial a_{j}} \theta^{\prime}\left(\hat{z}_{j}\right)+\frac{2}{m}\sum_{k}\left(z_{k}-\mu\right)\frac{\partial C}{\partial \sigma^2}+\frac{1}{m}\frac{\partial C}{\partial \mu}  \end{aligned}
			\end{equation}
		}
		\item[-]{Calculating the gradient of weight, $\gamma$, and $\beta$ respectively.
			\begin{equation}
			g^W = \mathcal{T}_{j} \cdot a_k \cdot \frac{\partial \tilde{W}_{j k}}{\partial W_{j k}^{l}} = \alpha \mathcal{T}_{j} \cdot a_k
			\end{equation}
			\begin{equation}
			g^{\gamma}=\sum_{k}\frac{\partial C}{\partial a_{k}} \frac{\partial a_k}{\partial \hat{z}_{k}} \frac{\partial \hat{z}_{k}}{\partial \gamma}=\sum_{k}\frac{\partial C}{\partial a_{k}}\theta^{\prime}\left(\hat{z}_{k}\right)\frac{z_k-\mu}{\sigma}
			\end{equation}
			\begin{equation}
			g^{\beta}=\sum_{k}\frac{\partial C}{\partial a_{k}} \frac{\partial a_k}{\partial \hat{z}_{k}} \frac{\partial \hat{z}_{k}}{\partial \beta}=\sum_{k}\frac{\partial C}{\partial a_{k}}\theta^{\prime}\left(\hat{z}_{k}\right)
			\end{equation}
		}
\end{enumerate}

With the foundation of the above formulas, we can propose our training algorithm for $n$-BQ-NN, as indicated in \textbf{Algorithm}~\ref{al}. This algorithm covers two learning modes: training from scratch and fine-tuning on the pretrained model, where the first mode means the weights are randomly initialized and the second mode means the weights are initialized by the pretrained full-precision network model. The overall quantization process is illustrated as \textbf{Fig}.\ref{procedure}. The code for training algorithm is available \footnote{\url{https://github.com/papcjy/n-BQ-NN}}.

\IncMargin{1em} 
\begin{algorithm}
\SetKwInOut{KwIn}{\textbf{Require}}
\SetKwInOut{KwOut}{\textbf{Ensure}}
\KwIn{a minibatch of outputs and targets $(a^L,y)$, learning rate $\eta$, previous weights $W^k$, previous BN parameters ($\gamma^k$,$\beta^k$), and a constant $\alpha$.} 
\KwOut{the updated weights ${(W^k)}^*$ and updated BN parameters $({(\gamma^k)}^*,{(\beta^k)}^*)$} 

\{1. Computing the parameter gradients:\}

\{1.1 Forward propagation:\}

\For{$k=1$ to $L$} 
{ 
	$\tilde{W}^k \leftarrow (1-\alpha) \operatorname{staircase}(W^k) + \alpha W^k$
	
	$z^k \leftarrow a^{k-1}\tilde{W}^k$	
	
	$\hat{z}^k \leftarrow \operatorname{BatchNorm}(z^k,\gamma^k,\beta^k)$
	
	$a^k \leftarrow \theta(\hat{z}^k)$	
} 
\{1.2 Backward propagation:\}

Computing $g^{a^L}=\frac{\partial C}{\partial a^L}$ based on $a^L$ and $y$.

\For{$k=L$ to $1$} 
{ 
	$(g^{\gamma^k},g^{\beta^k}) \leftarrow  \operatorname{BackBatchNorm}(g^{a^k},z^k,\gamma^k,\beta^k)$
	
	$\mathcal{T}^k \leftarrow g^{a^k} \theta^{\prime}(\hat{z}^{k})$
	
	$g^{a^{k-1}} \leftarrow \mathcal{T}^k \tilde{W}^k$
	
	$g^{W^k} \leftarrow \alpha {(\mathcal{T}^k)}^T a^{k-1}$
} 
\{2. Updating the parameter gradients:\}

\For{$k=1$ to $L$} 
{ 
	$({(\gamma^k)}^*,{(\beta^k)}^*) \leftarrow \operatorname{Update}(\gamma^k,\beta^k,\eta,g^{\gamma^k},g^{\beta^k})$
	
	${(W^k)}^* \leftarrow \operatorname{Update}(W^k, g^{W^{k}}, \eta)$
} 
	\caption{Training algorithm for $n$-BQ-NN with BN. $C$ is the cost function for mini-batch, $\theta$ is the activation function, and $L$ is the number of layers. The function $\operatorname{staircase}(\cdot)$ specifies how to quantize the weights. $\operatorname{BatchNorm}()$ specifies how to batch-normalize the inputs. $\operatorname{BackBatchNorm}()$ specifies how to back-propagate through the BN. $\operatorname{Update}()$ specifies how to update the parameters when their gradients are known, using either SGD or ADAM.}
	\label{al}
\end{algorithm}
\DecMargin{1em}

\subsection{Activation Quantization in $n$-BQ-NN}

The above discussion is all about the quantization of weights. To take the integrity of our $n$-BQ-NN and the necessity of subsequent ablation experiments into consideration, we need to discuss the quantization of activations in this subsection. Now, let's put our eyes back on Section III-B. In the case of the quantized activations, the output of the $j$-th neuron of the $l$-th layer can be rewritten as,

\begin{equation}
\hat{a}^l_j = \operatorname{staircase}(z^l_j)
\label{activation}
\end{equation}

Where $\operatorname{staircase}()$ is activation function.

At this point, we have encountered the same problem, the error of network $\mathcal{T}_{j}^{l}$ becomes zero due to the existence of $\frac{\partial \hat{a}_{j}^{l}}{\partial z_{j}^{l}}$, when the \textbf{Eq}.\ref{activation} is substituted into \textbf{Eq}.\ref{backpropagation1} and \textbf{Eq}.\ref{backpropagation3}.

Considering the expectation of $\frac{\partial C}{\partial z_{j}^{L}}$, the error of network has reappeared, which is guaranteed by \textbf{Theorem}~\ref{thm3}.

\begin{theorem}
	\label{thm3}
	Let us define $C=C(\hat{a}_j,\epsilon_j)$ where $\hat{a}_j$ follows \textbf{Eq}.\ref{activation} that is chosen from $(\pm2^{-0},\pm2^{-1},\cdots,0)$, then, we get a new expression as
	\begin{equation}
	\mathbb{E}_{\epsilon_j}\left[\frac{\partial C}{\partial z_j}\right]=\lambda\frac{\partial C}{\partial \hat{a}_j}, \operatorname{if} |z_j| \leq 1,
	\label{estimator}
	\end{equation} Where $\epsilon_j$ is the noise source that influences $z_j$, $\mathbb{E}_{\epsilon_j}[\cdot]$ means the expectation over $z_j$, and $\lambda$ is a constant.
\end{theorem}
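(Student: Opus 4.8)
The plan is to read \textbf{Eq}.\ref{estimator} as a straight-through-estimator identity. The deterministic derivative $\operatorname{staircase}'(z_j)$ vanishes almost everywhere (this is exactly the obstruction flagged right after \textbf{Eq}.\ref{activation}), so rather than back-propagate through it directly I would treat the quantization as \emph{stochastic}: the noise source $\epsilon_j$ perturbs $z_j$, turning $\hat a_j=\operatorname{staircase}(z_j)$ into a random variable, and I would then show that the \emph{noise-averaged} gradient is nonzero and proportional to $\partial C/\partial \hat a_j$.

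First I would write $z_j$ as a clean value corrupted by $\epsilon_j$ and apply the chain rule through the single channel by which $C$ sees $z_j$,
\begin{equation}
\frac{\partial C}{\partial z_j}=\frac{\partial C}{\partial \hat a_j}\,\operatorname{staircase}'(z_j),
\end{equation}
and then take the expectation over $\epsilon_j$,
\begin{equation}
\mathbb{E}_{\epsilon_j}\!\left[\frac{\partial C}{\partial z_j}\right]=\mathbb{E}_{\epsilon_j}\!\left[\frac{\partial C}{\partial \hat a_j}\,\operatorname{staircase}'(z_j)\right].
\end{equation}
The next step is a small-noise (local-linearity) argument: over the support of $\epsilon_j$ the sensitivity $\partial C/\partial \hat a_j$ is essentially constant, so it factors out of the expectation, leaving
\begin{equation}
\mathbb{E}_{\epsilon_j}\!\left[\frac{\partial C}{\partial z_j}\right]\approx\frac{\partial C}{\partial \hat a_j}\;\mathbb{E}_{\epsilon_j}\!\left[\operatorname{staircase}'(z_j)\right],
\end{equation}
which is precisely \textbf{Eq}.\ref{estimator} with $\lambda:=\mathbb{E}_{\epsilon_j}[\operatorname{staircase}'(z_j)]$.

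It then remains to argue that $\lambda$ is a constant on $|z_j|\le 1$. Here I would read $\mathbb{E}_{\epsilon_j}[\operatorname{staircase}'(z_j)]$ as the convolution of the noise density with $\operatorname{staircase}'$; since $\operatorname{staircase}'$ is a weighted sum of Dirac masses sitting at the thresholds $\Delta_i$ of \textbf{Eq}.\ref{eq1}, this convolution is a genuine smooth function of the clean value. In the interior the jump heights telescope so the convolution is flat, whereas the saturation of $\operatorname{staircase}$ outside $(-1,1)$ makes the contribution taper near the boundary, which is what forces the restriction $|z_j|\le 1$.

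The hard part will be the two interchanges the argument quietly performs: swapping $\mathbb{E}_{\epsilon_j}$ with the derivative of a \emph{discontinuous} step function (formally a Dirac comb), and pulling $\partial C/\partial \hat a_j$ out of the expectation. I would discharge the first by working with the noise-smoothed activation $\mathbb{E}_{\epsilon_j}[\hat a_j]$, whose derivative is bona fide and equals the convolution above (dominated convergence under a smooth, compactly supported noise density), and the second by the local-linearity assumption on $C$. Showing that the resulting $\lambda$ is \emph{exactly} rather than approximately constant across all of $|z_j|\le 1$ is the most delicate point and would require a mild symmetry/spacing assumption on the noise relative to the dyadic thresholds.
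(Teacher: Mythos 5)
Your high-level plan (smooth the quantizer with noise, exchange expectation and derivative by differentiating the noise-averaged quantity, and extract $\partial C/\partial\hat a_j$ to first order) is the same straight-through-estimator philosophy the paper uses, and your fix for the distributional derivative --- differentiate $\mathbb{E}_{\epsilon_j}[\,\cdot\,]$ rather than average $\operatorname{staircase}'$ --- is exactly the paper's opening move $\mathbb{E}_{\epsilon_j}[\partial_{z_j}C]=\partial_{z_j}\mathbb{E}_{\epsilon_j}[C]$. But the point you flag as ``the most delicate'' is where your route actually breaks, and the paper resolves it by a different mechanism than the one you sketch. In your formulation $\lambda=\mathbb{E}_{\epsilon_j}[\operatorname{staircase}'(z_j)]$ is the convolution of the noise density with the Dirac comb sitting at the dyadic thresholds $\pm\Delta_i$, and this is \emph{not} constant on $|z_j|\le 1$: with uniform noise on $[-1,1]$ it equals $\tfrac12\bigl(\operatorname{staircase}(z_j+1)-\operatorname{staircase}(z_j-1)\bigr)$, which is $1$ at $z_j=0$ but decays as $|z_j|$ grows, because all the Dirac masses are clustered near the origin (at $\pm 2^{-i}$) rather than spaced so that the jumps ``telescope'' to a flat profile. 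No mild symmetry or spacing assumption on the noise fixes this for the actual staircase of \textbf{Eq}.\ref{eq1}; the claim that the interior convolution is flat is simply false for dyadic thresholds.

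The paper sidesteps this entirely by adopting a different stochastic model: the noise $\epsilon_j$ does not jitter $z_j$ across the staircase thresholds but instead governs a single Bernoulli event, with $\hat a_j$ taking the positive codewords $+2^i$ with probability $P(z_j>\epsilon_j\,|\,z_j)$ and the negative codewords otherwise. Then $\partial_{z_j}\mathbb{E}_{\epsilon_j}[C]=\partial_{z_j}P(z_j>\epsilon_j\,|\,z_j)\cdot\bigl[\sum_i C(\hat a_j{=}{+}2^i)-\sum_i C(\hat a_j{=}{-}2^i)\bigr]$; the first factor is constant on $|z_j|\le1$ (and zero outside) because $\epsilon_j$ is uniform and there is only \emph{one} threshold, and the bracket is handled by Taylor-expanding $C$ about the fixed point $\hat a_j=0$ (not by your local-linearity-over-the-noise-support argument), so the odd-order terms survive the difference and produce $\lambda\,\partial C/\partial\hat a_j\big|_{\hat a_j=0}$ with $\lambda$ a genuine numerical constant. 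So the gap in your proposal is concrete: constancy of your $\lambda$ fails under the faithful multi-threshold noise model, and to recover the theorem as stated you would have to either replace that model with the paper's single-threshold sign-flip model or accept a $z_j$-dependent prefactor.
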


\begin{proof}
	\begin{equation}
	\begin{split}
	&\mathbb{E}_{\epsilon_j}\left[\frac{\partial}{\partial z_j} C\right]=\frac{\partial}{\partial z_j} \mathbb{E}_{\epsilon_j}[C] \\ 
	&=\frac{\partial}{\partial z_j}[\sum_{i}C(\hat{a}_j=+2^i) P(z_j>\epsilon_j | z_j)+\\
	&\sum_{i}C(\hat{a}_j=-2^i)(1-P(z_j>\epsilon_j |z_j))] \\ 
	&=\frac{\partial P(z_j>\epsilon_j | z_j)}{\partial z_j}[\sum_{i}C(\hat{a}_j=+2^i)-\sum_{i}C(\hat{a}_j=-2^i)]
	\label{key}
	\end{split}
	\end{equation}	
	For $C(\hat{a}_j=\pm2^i)$, we can approximate it using the Taylor expansion.
	\begin{equation}
	\begin{split}
	&C(\hat{a}_j=+2^i))=C(\hat{a}_j=0)+\frac{\partial C}{\partial \hat{a}_j}\bigg|_{\hat{a}_j=0} 2^i+\\
	&\frac{\partial^{2} C}{\partial \hat{a}_j^{2}}\bigg|_{\hat{a}_j=0} 2^{2i}+O\left(\frac{\partial^{3} C}{\partial \hat{a}_j^{3}}\bigg|_{\hat{a}_j=0} 2^{3i}\right)\\
	&C(\hat{a}_j=-2^i))=C(\hat{a}_j=0)-\frac{\partial C}{\partial \hat{a}_j}\bigg|_{\hat{a}_j=0} 2^i+\\
	&\frac{\partial^{2} C}{\partial \hat{a}_j^{2}}\bigg|_{\hat{a}_j=0} 2^{2i}+O\left(\frac{\partial^{3} C}{\partial \hat{a}_j^{3}}\bigg|_{\hat{a}_j=0} 2^{3i}\right)
	\label{L}
	\end{split}
	\end{equation}	
	For $\frac{\partial P(z_j>\epsilon_j | z_j)}{\partial z_j}$, we split it into two parts.
	\begin{equation}
	\begin{split}
	&\frac{\partial P(z_j>\epsilon_j | z_j)}{\partial z_j}=\underset{|z_j|>1}{\frac{\partial P(z_j>\epsilon_j | z_j)}{\partial z_j}}+\underset{|z_j|\leq 1}{\frac{\partial P(z_j>\epsilon_j | z_j)}{\partial z_j}}\\
	&=\frac{\partial \int_{-1}^{1}\frac{1}{2}\,d\epsilon_j}{\partial z_j}+\frac{\partial \int_{-z_j}^{z_j}\frac{1}{2}\,d\epsilon_j}{\partial z_j}=\mathbf{1}_{|z_j| \leq 1}
	\label{partical}
	\end{split}
	\end{equation}
	Combining \textbf{Eq}.\ref{L} and \textbf{Eq}.\ref{partical}, the \textbf{Eq}.\ref{key} can be derived as
	\begin{equation}
	\mathbb{E}_{\epsilon_j}\left[\frac{\partial C}{\partial z_j}\right]=\mathbf{1}_{|z_j| \leq 1}\left(2\sum_{i}2^{2i}\frac{\partial C}{\partial \hat{a}_j}\bigg|_{\hat{a}_j=0}\right)
	\label{ste2}
	\end{equation}
	Let $2\sum_{i}2^{2i}=\lambda$, then
	\begin{equation}
	\mathbb{E}_{\epsilon_j}\left[\frac{\partial C}{\partial z_j}\right]=\lambda\frac{\partial C}{\partial \hat{a}_j}\mathbf{1}_{|z_j| \leq 1}
	\label{STE}
	\end{equation}
\end{proof}	

Under the \textbf{Theorem}~\ref{thm3}, we can re-express the error of network and quantize the activations in our $n$-BQ-NN by rewriting the \text{Eq}.\ref{backpropagation1} and \text{Eq}.\ref{backpropagation3} as follows,

\begin{equation}
\mathcal{T}_{j}^{L}=\frac{\partial C}{\partial z_{j}^{L}}=\lambda\frac{\partial C}{\partial \hat{a}_{j}^{L}}\mathbf{1}_{|z_j| \leq 1}
\end{equation}
\begin{equation}
\begin{aligned}
\mathcal{T}_{j}^{l} &=\frac{\partial C}{\partial z_{j}^{l}}=\lambda\frac{\partial C}{\partial \hat{a}_{j}^{l}}\mathbf{1}_{|z_j| \leq 1}=\lambda\sum_{k} \frac{\partial C}{\partial z_{k}^{l+1}} \cdot \frac{\partial z_{k}^{l+1}}{\partial \hat{a}_{j}^{l}} \\ &=\lambda\sum_{k} \mathcal{T}_{k}^{l+1} \cdot \frac{\partial\left(\hat{W}_{k j}^{l+1} \hat{a}_{j}^{l}+b_{k}^{l+1}\right)}{\partial \hat{a}_{j}^{l}} \\ &=\lambda\sum_{k} \mathcal{T}_{k}^{l+1} \cdot \hat{W}_{k j}^{l+1}\mathbf{1}_{|z_j| \leq 1}
\end{aligned}
\end{equation}

\section{Experiment}

\renewcommand\arraystretch{1.5}
\begin{table}
	\caption{The outline of the proposed $n$-BQ-NN network architecture. Here, taking the CIFAR datasets as an example, the initial input size of the network is $32\times32\times3$. The conv quantized contains three calculation steps, respectively, $\hat{W}=\operatorname{staircase}(W)$, $\operatorname{net}=\operatorname{conv2d}(\hat{W},x)$, and $\operatorname{net}=\operatorname{BatchNorm}(\operatorname{net})$, where the weights involved in convolution calculation are quantized weights that are chosen from the entries $(\pm2^{-0},\pm2^{-1},\cdots,0)$. In convolution calculation, the multiplications are replaced by SHIFT operations during the inference, because the weights are power of 2.}
	\label{TBQ-Net_structure}
	\begin{center}
		\begin{tabular}{|l|c|c|}
			\hline
			\textbf{type} & \textbf{patch size/stride} & \textbf{output size} \\
			\hline
			\hline
			conv quantized & 3$\times$3/1 & 32$\times$32$\times$128 \\
			\hline
			conv quantized & 3$\times$3/1 & 32$\times$32$\times$128 \\
			\hline
			conv quantized & 3$\times$3/1 & 32$\times$32$\times$128 \\
			\hline
			pool & 2$\times$2/2 & 16$\times$16$\times$128 \\
			\hline
			conv quantized & 3$\times$3/1 & 16$\times$16$\times$256 \\
			\hline
			conv quantized & 3$\times$3/1 & 16$\times$16$\times$256 \\
			\hline
			conv quantized & 3$\times$3/1 & 16$\times$16$\times$256 \\
			\hline
			pool & 2$\times$2/2 & 8$\times$8$\times$256 \\
			\hline
			conv quantized & 3$\times$3/1 & 8$\times$8$\times$512 \\
			\hline
			conv quantized & 1$\times$1/1 & 8$\times$8$\times$1024 \\
			\hline
			conv quantized & 1$\times$1/1 & 8$\times$8$\times$10 (100) \\
			\hline
			pool & 8$\times$8 & 1$\times$1$\times$10 (100) \\
			\hline
			softmax & classifier & 1$\times$1$\times$10 (100) \\
			\hline
		\end{tabular}
	\end{center}
\end{table}

In our experiments, we use three network structures ResNet, DenseNet and AlexNet. The network structure of our $n$-BQ-NN ($n$ can take $1,2,3,4,5$) is similar to the architecture of All-CNN \cite{springenberg2014striving} that consists solely of convolution layers and Network in Network block \cite{lin2013network}. \textbf{Table}~\ref{TBQ-Net_structure} details the parameter settings and our network architecture. In the following experiments, our training algorithm is used to train the model from scratch or fine-tune on the full-precision model in five benchmark datasets MNIST, SVHN, CIFAR-10, CIFAR-100 and ImageNet. We unfold our experiments from 4 dimensions, respectively classification accuracy compared with low-precision QNNs, quantization errors by our training method, compression ratio in different datasets, and convergence speed compared with BNN.

\subsection{MNIST}

The MNIST dataset \cite{lecun1998gradient} consists of handwritten digit images with 32$\times$32 pixels, organized into 10 classes (0 to 9). The training and test sets contain 60,000 and 10,000 images respectively. We perform this dataset without data augmentation~\cite{lee2015deeply}.

\subsection{CIFAR}

The two CIFAR datasets \cite{krizhevsky2009learning} consist of natural color images with 32$\times$32 pixels, respectively 50,000 training and 10,000 test images, and we hold out 5,000 training images as a validation set from the training set. CIFAR-10 (C10) consists of images organized into 10 classes and CIFAR-100 (C100) into 100 classes. We adopt a standard data augmentation scheme (random corner cropping and random flipping) that is widely used for these two datasets \cite{srivastava2015training,huang2016deep,larsson2016fractalnet,lin2013network,romero2014fitnets,lee2015deeply,springenberg2014striving,srivastava2015training}. We normalize the images using the channel means and standard deviations in preprocessing.

\subsection{SVHN}
The SVHN dataset \cite{netzer2011reading} consists of color images of house numbers collected by Google Street View with 32$\times$32 pixels, organized into 10 classes (0 to 9). There are 73,257 images in the training set, 531,131 images for additional training, and 26,032 images in the test set respectively. We divide the pixel values by 255.0 so that they are in the [0,1] range as \cite{zagoruyko2016wide}. Moreover, we do not preprocess the dataset following common practice without data augmentation  \cite{goodfellow2013maxout,huang2016deep,lin2013network,lee2015deeply,sermanet2012convolutional}.

\subsection{Experiment Results}

\subsubsection{$n$-bit}

\renewcommand\arraystretch{1.5}
\begin{table}[h]
	\caption{Our $n$-BQ-ResNet generates extremely low-precision models with very similar accuracy compared with full-precision ResNet-110 model on CIFAR-10.}
	\label{nbit} 
	\begin{center}
		\begin{tabular}{c c c}
			\toprule[1pt]
			\textbf{Model} & \textbf{Bit-width} & \textbf{Test error} \\
			\midrule
			ResNet-110 ref & 16 & 6.61\% \\
			$n$-BQ-ResNet & 5 & 7.04\% \\
			$n$-BQ-ResNet & 4 & 7.07\% \\
			$n$-BQ-ResNet & 3 & 7.15\% \\
			$n$-BQ-ResNet & 2 & 8.76\% \\
			$n$-BQ-ResNet & 1 & 10.52\% \\
			\bottomrule[1pt]
		\end{tabular}
	\end{center}
\end{table}
As the theoretical analysis in Section III, different quantized bit width brings different sampling loss, and the larger bit width means the less sampling loss. Thus, in this experiment, we evaluate the test error rates of our $n$-BQ-ResNet that is fine-tuned on full-precision ResNet-110 when $n$ takes different values on CIFAR-10. The experimental results from \textbf{Table}~\ref{nbit} are consistent with \textbf{Eq}.\ref{note}. Therefore, the choice of 3-bit is better because $\mathcal{L}(4)$ is close to $0$ as \textbf{Eq}.\ref{error} when considering both the sampling loss and the conciseness of weight representation. Obviously, this result is also experimentally proved by works in \cite{zhou2017incremental}. Thus, our $n$-BQ-NN is chosen as T-BQ-NN when $n=3$ in the subsequent experiments.

\subsubsection{Accuracy and Capacity}

\renewcommand\arraystretch{1.5}
\begin{table*}[h]
	\caption{Error rates on CIFAR-10 and CIFAR-100 datasets with standard data augmentation (translation and mirroring). Error rates on MNIST and SVHN datasets without data augmentation. The overall best results are \textbf{bold}. ``*" represents the results run by our implementation, the rest of results represents that these results are directly cited from the paper in the front of the row.}
	\label{test_error}
	\begin{center}
		\begin{tabular}{|c c c | c c c c|}
			\hline
			 & & & \multicolumn{4}{c|}{\textbf{Test error}} \\
			\hline
			\textbf{Method} & \textbf{Depth} & \textbf{Params} & CIFAR-10 & CIFAR-100 & SVHN & MNIST \\
			\hline
			Network in Network \cite{lin2013network} & 9 & 1.9M & 8.81\% & 35.68\% & 2.35\% & 0.53\% \\
			All-CNN \cite{springenberg2014striving} & 9 & 1.4M & \textbf{7.25\%} & 33.71\% & $^*$3.17\% & $^*$0.63\% \\
			Highway Network \cite{srivastava2015training} & 19 & 2.3M & 7.72\% & 32.39\% & $^*2.61$\% & 0.67\% \\
			\hline
			BNN \cite{Courbariaux2016BinarizedNN} & 9 & 1.7M & 11.40\% & $^*42.13$\% & 2.80\% & 0.96\% \\
			\hline
			Round Quantization & 9 & 1.2M & 85.88\% & 98.90\% & 83.72\% & 80.55\% \\ 
			T-BQ-NN & 9 & 1.2M & 7.59\% & \textbf{28.90\%} & \textbf{2.29\%} & \textbf{0.50\%} \\
			\hline
		\end{tabular}
	\end{center}
\end{table*}

\renewcommand\arraystretch{1.5}
\begin{table}[htpb]
	\caption{Fine-tunning ResNet and DenseNet by our training algorithm on CIFAR10(100) and SVHN. Where the results on C10 and C100 with data augmentation and the results on SVHN without data augmentation.}
	\label{quantization_error} 
	\begin{center}
		\begin{tabular}{|c|c c c c c|}
			\hline
			& \textbf{Network} & \textbf{Depth} & \textbf{Bit-width} & \textbf{Params} & \textbf{Test error(\%)} \\
			\hline
			\multirow{4}*{\rotatebox{90}{CIFAR-10}} & ResNet & 110 & 16 & 1.7M & 6.61 \\
			& T-BQ-ResNet & 110 & 3 & 0.3M & 7.15 (+0.54) \\
			& DenseNet & 100 & 16 & 0.8M & 4.51 \\
			& T-BQ-DenseNet & 100 & 3 & 0.15M & 5.31 (+0.80) \\
			\hline
			\multirow{4}*{\rotatebox{90}{CIFAR-100}} & ResNet & 110 & 16 & 1.7M & 35.87 \\
			& T-BQ-ResNet & 110 & 3 & 0.3M & 37.56 (+1.69) \\
			& DenseNet & 100 & 16 & 0.8M & 22.27 \\
			& T-BQ-DenseNet & 100 & 3 & 0.15M & 24.10 (+1.83) \\
			\hline
			\multirow{4}*{\rotatebox{90}{SVHN}} & ResNet & 110 & 16 & 1.7M & 3.13 \\
			& T-BQ-ResNet & 110 & 3 & 0.3M & 3.25 (+0.12) \\
			& DenseNet & 100 & 16 & 0.8M & 1.76 \\
			& T-BQ-DenseNet & 100 & 3 & 0.15M & 2.10 (+0.34) \\
			\hline
		\end{tabular}
	\end{center}
\end{table}

\renewcommand\arraystretch{1.5}
\begin{table}[h]
	\caption{Deep compression method for T-BQ-ResNet and T-BQ-DenseNet. P: Pruning, Q: Quantization, H: Huffman coding.}
	\label{compression_ratio} 
	\begin{center}
		\begin{tabular}{c c c}
			\toprule[1pt]
			\textbf{Method} & \textbf{\makecell*[c]{Encoding \\ bit-width}} & \textbf{Compression ratio} \\
			\midrule
			T-BQ-ResNet on C10 (P+Q) & 3 & 49$\times$ \\
			T-BQ-ResNet on C10 (P+Q+H) & 2.6 & 57$\times$ \\
			T-BQ-ResNet on C100 (P+Q) & 3 & 25$\times$ \\
			T-BQ-ResNet on C100 (P+Q+H) & 2.8 & 27$\times$ \\
			T-BQ-ResNet on SVHN (P+Q) & 3 & 24$\times$ \\
			T-BQ-ResNet on SVHN (P+Q+H) & 2.8 & 26$\times$ \\
			
			T-BQ-DenseNet on C10 (P+Q) & 3 & 38$\times$ \\
			T-BQ-DenseNet on C10 (P+Q+H) & 2.5 & 46$\times$ \\
			T-BQ-DenseNet on C100 (P+Q) & 3 & 15$\times$ \\
			T-BQ-DenseNet on C100 (P+Q+H) & 2.8 & 16$\times$ \\
			T-BQ-DenseNet on SVHN (P+Q) & 3 & 133$\times$ \\
			T-BQ-DenseNet on SVHN (P+Q+H) & 2.1 & 190$\times$ \\
			\bottomrule[1pt]
		\end{tabular}
	\end{center}
\end{table}

\renewcommand\arraystretch{1.8}
\begin{table*}[h]
	\caption{Comparison of classification accuracy on the test set for ImageNet with different bitwidths of weights and activations. Single-crop evaluation results top-1 and top-5 accuracy are given based on AlexNet. Note the gray results are implemented by our $n$-BQ-NN where the training method of 1-bit activations is introduced in Section III-E, and other results are reported by \cite{sze2017efficient}. We quantize the same layers of AlexNet to low-precision, as BNN~\cite{Courbariaux2016BinarizedNN}, BC~\cite{courbariaux2015binaryconnect}, TWN~\cite{li2016ternary} and DoReFa-Net~\cite{zhou2016dorefa} do.}
	\label{alexNet} 
	\begin{center}
		\begin{tabular}{p{40pt}p{50pt}p{70pt}p{70pt}p{70pt}}
			\toprule[1pt]
			\textbf{$n$-bit Weights} & \textbf{$n$-bit Activations} & \textbf{Inference Operation} & \textbf{AlexNet Top-1 Accuracy} & \textbf{AlexNet Top-5 Accuracy} \\
			\midrule			
			1 & 1 & XNOR & 0.279 (BNN) & 0.504 (BNN) \\
			\rowcolor{mygray}1 & 1 & XNOR & 0.348 & 0.601 \\
			\hline
			1 & 32 (float) & XNOR ADDER & 0.368 (BC) & 0.620 (BC) \\
			\hline
			\rowcolor{mygray}1 & 16 (float) & XNOR ADDER & 0.486 & 0.734 \\
			\hline
			\hline
			2 & 32 (float) & XNOR ADDER & 0.529 (TWN) & 0.766 (TWN) \\
			\rowcolor{mygray}2 & 16 (float) & XNOR ADDER & 0.536 & 0.777 \\
			\hline
			\hline
			\rowcolor{mygray}3 & 16 (float) & SHIFT ADDER & \textbf{0.560} & \textbf{0.795} \\
			\hline
			\hline
			8 (float) & 8 (float) & MAC & 0.530 (DoReFa-Net) & 0.768 (DoReFa-Net) \\
			\hline
			32 (float) & 32 (float) & MAC & 0.566 & 0.802 \\
			\bottomrule[1pt]
		\end{tabular}
	\end{center}
\end{table*}

As hardware devices require relatively simple architecture and less number of layers, we have selected some network suited for hardware implementation as our comparative experiment. For example, BNN with binary weights and activations replaces most multiplications by 1-bit XNOR operations. Network in Network utilizes the global average pooling over feature maps in the classification layer, which is less prone to overfitting than the fully connected layers. All-CNN achieves a new architecture that consists solely of convolution layers replacing max-pooling by a convolution layer without loss in accuracy on several benchmarks. Highway Network allows unimpeded information flow across many layers using adaptive gating units to regulate the information flow. 

There is a general manifestation that T-BQ-NN performs better than most other network structures, while these network structures have never been quantized except BNN. In the experiment here, T-BQ-NN is trained by our training algorithm from scratch due to the lack of pretrained model, and this model is trained with a mini-batch size of 50, a weight decay of 0.0001. Its test error rates of 7.59 \% on CIFAR-10,  28.9 \% on CIFAR-100, 2.29 \% on SVHN, and 0.5 \% on MNIST are lower than the test error rates achieved by Network in Network, Highway Network, and BNN. Particularly, T-BQ-NN makes up for the classification accuracy of BNN on CIFAR-100 to some extent. The best result for all listed datasets is T-BQ-NN except CIFAR-10 is All-CNN, and all results are shown in \textbf{Table}~\ref{test_error}.

Our model capacity is even more encouraging: the number of parameters of T-BQ-NN is significantly lower than other network structures. Particularly, T-BQ-NN achieves the number of parameters of 1.2M that is even lower than 1.7M of BNN shown in \textbf{Table}~\ref{test_error}.

\subsubsection{Extension}

\begin{figure*}[h]
	\centering
	\includegraphics[width=1.0\textwidth]{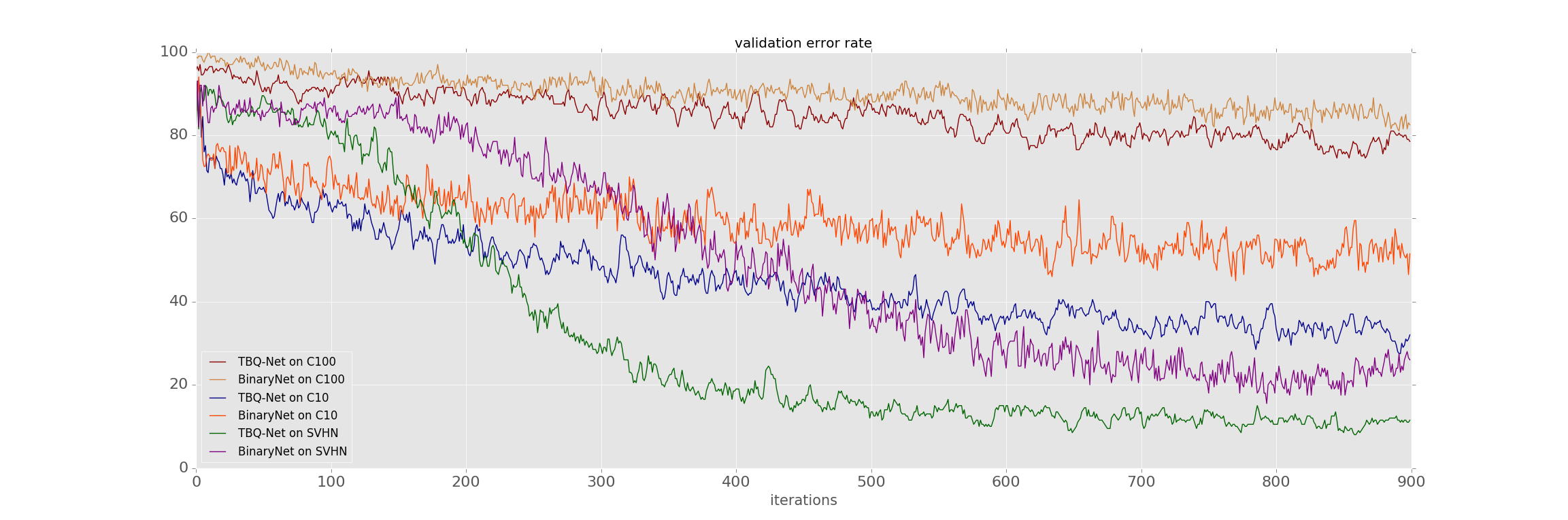}
	\caption{Comparison of the T-BQ-NN and BNN top-1 error rates on CIFAR10, CIFAR100, and SVHN validation datasets. Where the curves, from top to bottom at 900 iterations, represent T-BQ-NN on C100, BNN on C100, T-BQ-NN on C10, BNN on C10, T-BQ-NN on SVHN, and BNN on SVHN, respectively. Note that the results of datasets run by ourselves.}
	\label{convergence_speed} 
\end{figure*}

One positive-effect of our training algorithm is universal. We popularize our training method to the better and deeper architectures, not just limited to CNNs, such as ResNet \cite{he2016deep} and DenseNet \cite{huang2017densely}. In the experiment here, T-BQ-ResNet and T-BQ-DenseNet are 3-bit that are fine-tuned by our training algorithm based on the full-precision model of ResNet and DenseNet.

For T-BQ-ResNet, all the multiplications are converted to SHIFT and ADDER operations using 3-bit weights in all convolutional layers and shortcut connections. We use a momentum of 0.9, weight decay of 0.0001 \cite{gross2016training,goodfellow2013maxout}, and adopt the weight initialization and BN \cite{ioffe2015batch,he2015delving} without dropout \cite{srivastava2014dropout}. This model is trained with a mini-batch size of 128 and a learning rate of 0.1, divided by 10 at 32k and 38k iterations, and terminates training at 64k iterations. We achieve the test error rates of 7.15\% on C10, 37.56\% on C100, and 3.25\% on SVHN using T-BQ-ResNet, just rises 0.54\% on C10, 1.69\% on C100, and 0.12\% on SVHN compared with ResNet on the basis of~\textbf{Table}~\ref{quantization_error}.

For T-BQ-DenseNet, its model consists of Bottleneck layers indicated to BN-ReLU-Conv(1 $\times$ 1)-BN-ReLU-Conv(3 $\times$ 3) and transition layers indicated to BN-ReLU-Conv(1 $\times$ 1)-averpool(2 $\times$ 2), and both of these layers contain 1x1 convolution. We use a weight decay of 0.0001, momentum of 0.9 \cite{sutskever2013importance}, and adopt the weight initialization and BN without dropout. This model is trained with an initial learning rate of 0.1, divided by 10 at 50\% and 75\% of the total number of training epochs. And we train using a batch size of 64 for 300 and 40 epochs respectively on CIFAR and SVHN. Comparing between DenseNet and T-BQ-DenseNet, the increasing in error is 0.80\% from 4.51\% to 5.31\% on C10, 1.83\% from 22.27\% to 24.10\% on C100, and 0.34\% from 1.76\% to 2.10\% on SVHN as~\textbf{Table}~\ref{quantization_error}.

We attribute this primarily to reduce the number of parameters approximately 5 times from 0.8M to 0.15M on T-BQ-DenseNet, and from 1.7M to 0.3M on T-BQ-ResNet is shown as \textbf{Table}~\ref{quantization_error}. Furthermore, a hybrid network compression solution combined with three different techniques, respectively network pruning \cite{li2016pruning}, quantization and Huffman coding is tested for T-BQ-ResNet and T-BQ-DenseNet in a scene with the same classification accuracy. Comparing with the original full-precision ResNet-110 model, we achieve the compression ratio of 57$\times$ on C10, 27$\times$ on C100 and 26$\times$ on SVHN for T-BQ-ResNet. For T-BQ-DenseNet, the compression ratio is 46$\times$ on C10, 16$\times$ on C100 and 190$\times$ on SVHN shown as~\textbf{Table}~\ref{compression_ratio}.

\subsubsection{Convergence Speed}
In this experiment, we train our T-BQ-NN and BNN from scratch on C10, C100, and SVHN. The results in \textbf{Fig}.\ref{convergence_speed} indicate that T-BQ-NN not only has better performance on classification accuracy than BNN but also converges much faster. We just only compare our method with BNN, because the weights of other network models in \textbf{Table}~\ref{test_error} are full-precision and these models are not quantized except BNN and T-BQ-NN. We use the same conditions, including learning rate, batch size, and iterations, to test the error rates of BNN and T-BQ-NN at first epoch. Comparing with BNN, T-BQ-NN reaches the best test error dropping from 500 to 150 epochs on C10, from 1000 to 100 epochs on MNIST, and from 1000 to 180 epochs on C100. As a result, T-BQ-NN can be trained much easier and faster than BNN.

This result may be due to the fact that straight-through estimator used by BNN contains noise, which causes the unexpected deviation, while our training algorithm is based entirely on back-propagation without the effect of noise and weight representation is more abundant.

\subsection{ImageNet}

We further evaluate our $n$-BQ-NN on ILSVRC2012 \cite{deng2009imagenet} image classification dataset that consists of 1.2 million high-resolution natural images where the validation set contains 50k images. These images are organized into 1000 categories of the object for training, which are resized to 224$\times$224 pixels before fed into the network. In the next experiments, we report our single-crop evaluation results using top-1 and top-5 accuracy.

\textbf{AlexNet:} This CNN architecture is the first structure that shows to be successful on ImageNet classification task, which consists of 5 convolutional layers and 2 fully-connected layers~\cite{Krizhevsky2012ImageNet}. We use AlexNet coupled with BN that contains a total of 61M parameters.

In training, images are resized randomly to 256$\times$256 pixels, and then a random crop of 224$\times$224 is selected for training. We train our $n$-BQ-NN for 50 epochs with batch size of 128/16 based on AlexNet/Vgg-16. We use ADAM optimizer with the learning rate of 1e-4. We replace the Local Contrast Renomalization layer with Batch Normalization layer. At inference, we use the 224$\times$224 center crop for forward-propagation.

The ablation experiments are listed in \textbf{Table}~\ref{alexNet}. The baseline AlexNet model scores 56.6\% top-1 accuracy and 80.2\% top-5 accuracy that is reported in \cite{rastegari2016xnor}. For ablation studies, we strictly control the consistency of variables, including network structure, bit width and quantized layers. The only difference is the quantization method. In experiments of ``1-1" v.s. ``1-1", ``1-16" v.s. ``1-32" and ``2-16" v.s. ``2-32", our $n$-BQ-NN achieves 6.9\%, 11.8\% and 0.7\% accuracy improvements respectively. For ``3-16" v.s. ``32-32", our $n$-BQ-NN only reduces the accuracy by 0.6\%.

\section{Acceleration on FPGA}

\begin{figure}[htbp]
	\centering
	\includegraphics[width=0.5\textwidth]{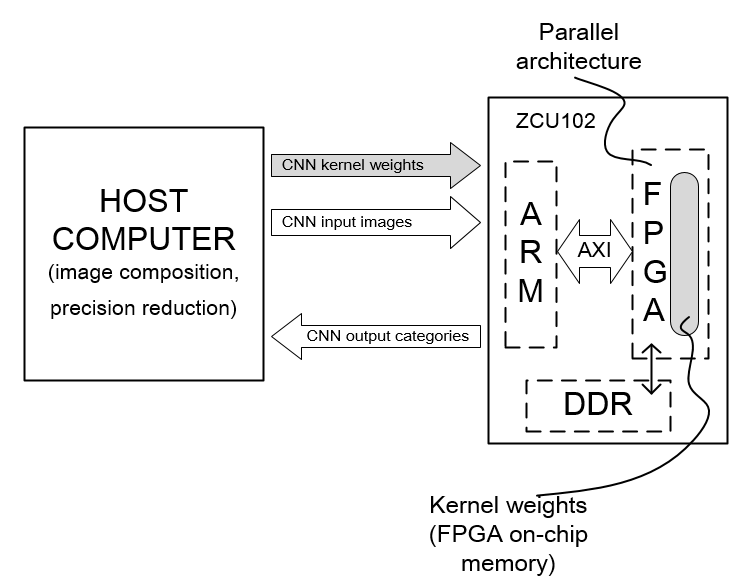}
	\caption{High-level block diagram of our system.}
	\label{high-level} 
\end{figure}

\begin{figure*}[htbp]
	\centering
	\includegraphics[width=0.9\textwidth]{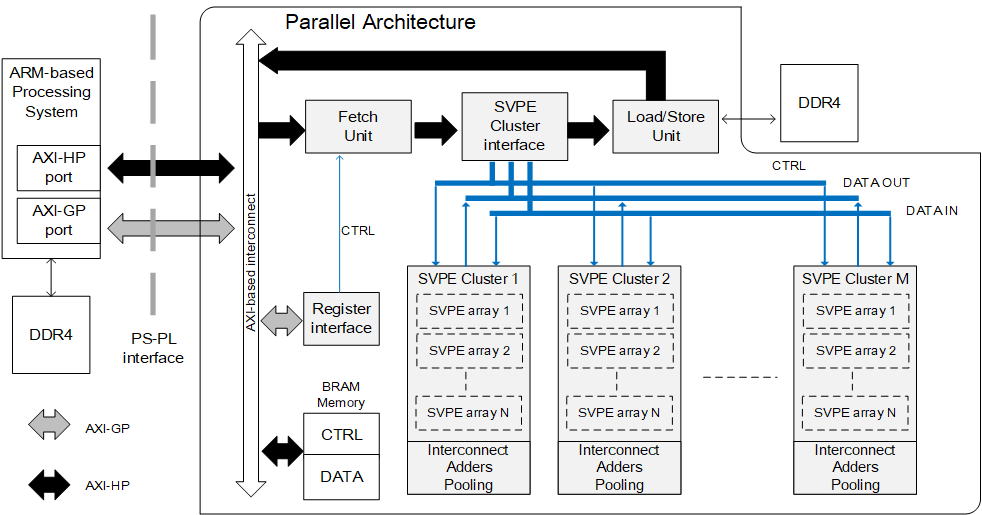}
	\caption{The overall parallel architecture with cluster of shift vector processing elements (SVPEs).}
	\label{architecture} 
\end{figure*}

\begin{figure*}[h]
	\centering
	\includegraphics[width=0.9\textwidth]{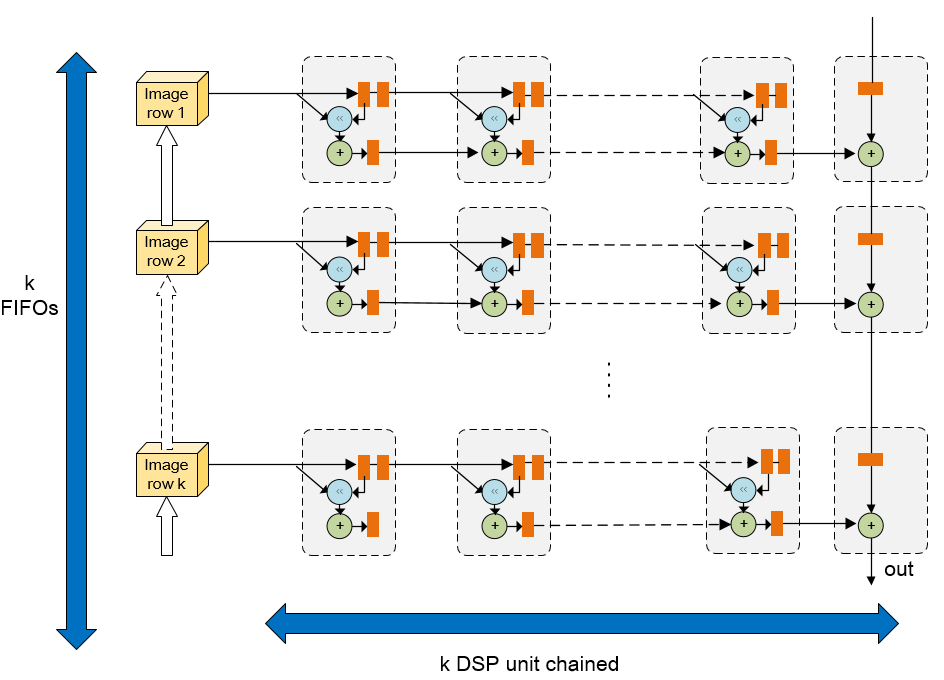}
	\caption{A SVPE array implementing the primitive 2D convolver unit. Where the double orange rectangles represent on-chip memory banks to buffer the weights, the single orange rectangles represent registers, the skyblue roundnesses represent shift operations instead of multipliers in VPE array, and the deepgreen roundnesses represent adders.}
	\label{SVPE_arry} 
\end{figure*}

We evaluate our $n$-BQ-NN on FPGA platform: Xilinx ZCU102, which consists of an UltraScale FPGA, quad ARM Cortex-A53 processors and 500 MB DDR3. To measure the performance of our $n$-BQ-NN running on FPGA, we get the data of run-time, resource utilization, and power by simulating and testing on Vivado-2017 when the operating frequency is 200 MHz. Our $n$-BQ-NN implementation involves a few design parameters, parallelization degree ($P_n$ and $P_m$), filter size ($k$), input feature maps width ($W$), input feature maps height ($H$), input feature channels ($M$), and output feature channels ($N$).

\subsection{Coprocessor Architecture}

\textbf{Fig}.\ref{high-level} shows the block diagram of our system. In the CNN calculation process, the host computer hands off the weights and images to the coprocessor (ZCU102), and collects the predicted classification results. The transmission mode between host computer and ARM CPU can be switched in PCI or UDP. Before the computation, the host computer is responsible for feeding the images and reducing precision. In addition, the ARM CPU needs to complete the calculation of fully-connected layers that is not suitable for FPGA parallel acceleration and the FPGA accelerates the calculation of convolutional layers.

We build the coprocessor with parallel architecture, as shown in \textbf{Fig}.\ref{architecture}. The critical part of the coprocessor is SVPE cluster interface that has $M$ SVPE clusters, where each SVPE cluster consists of $N$ SVPE arrays with a size of $k \times k$. The adders are used to compute partial sums of convolutions while the SVPE arrays compute convolutions. The fetch unit is programmed to fetch images and weights from ARM-based processing system (PS), and the load/store unit is used to load or store intermediate calculation results. The AXI-HP port is used to receive or send the data, and the AXI-GP port is used to receive or send the network structure information and the control signal. A key point to note is 16-bit computational accuracy acts on the data buses to save data bandwidth.

The basic design ideas continue the architecture of $n$-BQ-NN, which converts all 16-bit weights as $n$-bit weights to reduce memory usage and increase the parallelization degree. On FPGAs, due to the shortage of DSPs, this has become a major factor limiting the increase in parallelization degree that directly affects the ability to accelerate calculation for CNNs, because the multiplication in the convolution calculation needs to call DSPs. Instead, we implement the multiplication with SHIFT operation that consumes the Look-Up-Table (LUT) arrays on FPGAs, while the resource of LUTs is more abundant than DSPs' \cite{lu2017evaluating}. In general, our $n$-BQ-NN, which consists of 16-bit activations and $n$-bit weights. 

Our architecture of convolution computation is characterized by several key attributes compared with VPEs \cite{sankaradas2009massively}. First, we organize the architecture as arrays of SVPEs, where the SVPE array is an array of 2D convolvers, each of which consists of $k^2$ connected SHIFT and ADDER units instead of Multiply Accumulate (MAC) units, shown in \textbf{Fig}.\ref{SVPE_arry}. The weights and feature maps are loaded into each PE alternately by AXI-HP port. Before each calculation, the weights are buffered to the specified areas (the double orange rectangles in \textbf{Fig}.\ref{SVPE_arry}), and then the pipeline calculation starts with the enablement of feature maps. Modeling the SVPE and VPE arrays, we compare their resource consumption, parallelization degree and power on FPGAs shown as \textbf{Table}~\ref{comparsion}, and our SVPE array achieves the average energy consumption of 3.81 W at different $n$ that is less than VPE array of 5.53 W. Second, we reduce banded off-chip data memory and improve the data movement between the SVPE clusters and the off-chip memory by using $n$-bit weights. Third, all convolvers are homogeneous when $k$ is fixed as our primitive operator. We evaluate the improvement of the computational efficiency of $n$-BQ-NN in hardware by the following section.

\subsection{Computational Efficiency}

\renewcommand\arraystretch{1.5}
\begin{table}[h]
	\caption{Comparison of VPE and SVPE array resource consumption, parallelization degree and power.}
	\label{comparsion} 
	\begin{center}
		\setlength{\tabcolsep}{1.5mm}{
			\begin{tabular}{|c|c|c|c|c|c|c|c|}
				\hline
				\multicolumn{2}{|c|}{\textbf{Array}}&
			    \multicolumn{5}{c|}{\textbf{SVPE}}&\textbf{VPE}\cr
				\hline
				\multicolumn{2}{|c|}{\textbf{Precision (n)}}&1&2&3&4&5&16 \\
				\hline
				\multirow{4}{*}{\textbf{Power (W)}}&Signal&1.94&2.31&2.61&2.53&2.13&4.88\\ \cline{2-8}
				&Logic&1.03&1.33&1.55&1.63&1.62&0.25\\ \cline{2-8}
				&DSPs&0.08&0.09&0.09&0.08&0.06&0.40\\ \cline{2-8}
				&Total&3.05&3.73&4.25&4.24&3.81&5.53\\ \cline{2-8}
				\hline
				\multirow{3}{*}{\textbf{\makecell*[c]{Used \\ Resource}}}&LUTs&353&280&307&334&346&41 \\ \cline{2-8}
				&FFs&220&226&232&238&244&213 \\ \cline{2-8}
				&DSPs&3&3&3&3&3&12 \\ \cline{2-8}
				\hline
				\multicolumn{2}{|c|}{\textbf{\makecell*[c]{Parallelization degree \\ ($P_n, P_m$)}}}&\multicolumn{5}{c|}{(8,32)}&(4,16)\\
				\hline
		\end{tabular}}
	\end{center}
\end{table}
 
Since the filter size (3$\times$3) is fixed for our $n$-BQ-NN, resource utilization will be maximized. Here, we can predict the performance of $n$-BQ-NN on FPGAs by developing an analytical model. In the following, we rely on it to compare computational efficiency between traditional implementation and $n$-BQ-NN on FPGAs.

On the hardware, MAC unit, adder and multiplier will consume DSP. In fact, the number of DSPs only depends on the size of filter and parallelization degree \cite{lu2017evaluating,peemen2013memory} as follows,
\begin{equation} 
DSP = (k^2 + k) \times P_n \times P_m
\label{eq15}
\end{equation}

We must balance the memory bandwidth between the on-chip and off-chip memory and ensure that the speed of transmission is greater than or equal to the speed of computation for utilizing the resource efficiently. The formula of the time to process input data in the line buffer on FPGA is, 
\begin{equation} 
T_{compute} = (H \times W \times \frac{M}{P_m} \times \frac{N}{P_n}) \times \frac{1}{Freq}
\label{eq16}
\end{equation}
Where $Freq$ is the operating frequency of the FPGA.
Together, we have to parallel the speed of transmission between input and output data as follows,
\begin{equation} 
T_{transfer} = \frac{M \times N \times k^2 + k \times W \times M}{Bandwidth}
\label{eq17}
\end{equation}

We require that $T_{transfer} \leq T_{compute}$. Therefore, we can get the minimum requirement of bandwidth is,
\begin{equation} 
Bandwidth_{min} = \frac{P_m \times P_n}{min(N,M)} \times b_{compute} \times Freq
\label{eq18}
\end{equation}
Where $b_{compute}$ is the bit-width of computation, and we evaluate the performance of hardware acceleration choosing 16 bit-width.
We define the $T_{init}$ as the time to load the first $n$ rows of input image and filter needed into on-chip memory as follows,
\begin{equation} 
T_{init} = \frac{M \times N \times k^2 \times b_{weight}}{Bandwidth}  + \frac{W \times M \times k}{Bandwidth} \times b_{compute}
\label{eq19}
\end{equation}
Where $b_{weight}$ is the bit-width of the weights. The total operations are,
\begin{equation} 
OPs = H \times W \times M \times N \times k^2 \times 2
\label{eq20}
\end{equation}

And the total processing time of the convolution is,
\begin{equation} 
T_{total} = T_{compute} + T_{init}
\label{eq21}
\end{equation}

Finally, we can compare the computational efficiency of the different models defining the effective performance of convolution as,
\begin{equation} 
Perf_{eff} = \frac{OPs}{T_{total}}
\label{eq22}
\end{equation}
We obtain the computational efficiency $Perf_{eff}(n)$ corresponding to different bit width of the weights where $n = b_{weight}$ represents the bit width of our $n$-BQ-NN.
\begin{equation} 
Perf_{eff}(n) = \frac{{32 Freq P_m P_n H W N k^2}}{16 H W N + n M N k^2 + 16 W M k}
\label{final}
\end{equation}

Now, given a convolutional layer represented by ($W$,$H$,$M$,$N$), we get the computational efficiency based on design parameters ($k$,$P_m$,$P_n$).

The main reason for restricting the computational efficiency of CNNs on FPGA is parallelization degree, which is directly related to DSPs when the setting of bandwidth is reasonable. To speed-up the inference of CNNs on FPGAs, we use our SVPE cluster to replace the traditional VPE cluster by converting the multiplications as the SHIFT and ADDER operations. Since we no longer use the multiplication, the amount of DSPs is reduced as follows,
\begin{equation} 
DSP = k \times P_n \times P_m
\label{eq23}
\end{equation}

Since SVPE array consumes much less DSPs than VPE array compared \textbf{Eq}.\ref{eq15} with \textbf{Eq}.\ref{eq23}, $n$-BQ-NN with SVPE array can get a larger amount of parallelization degree than CNNs with VPE array when the consumed DSPs are the same. Based on the maximum DSPs number of 2520 as \textbf{Table}~\ref{n-bit} and the balanced memory bandwidth, we can design the maximum parallelization degree of ($P_m=32$,$P_n=8$) and ($P_m=16$,$P_n=4$) respectively on SVPE and VPE array with filters 3$\times$3. Thanks to the SVPE array, the parallelization degree increases by 4 times to improve the computational efficiency greatly when the total consumed DSPs is 768. Supposing the color image is 32($W$)$\times$32($H$)$\times$3($M$) pixels, filter size $k$ is 3, DDR bit width $N$ is 128, the computational efficiency of our $n$-BQ-NN using SVPE array has improved by about 4.1 times compared with traditional network using VPE array on the basis of \textbf{Eq}.\ref{final}.

\subsection{Performance on FPGA}

\begin{figure}[htbp]
	\centering
	\includegraphics[width=0.5\textwidth]{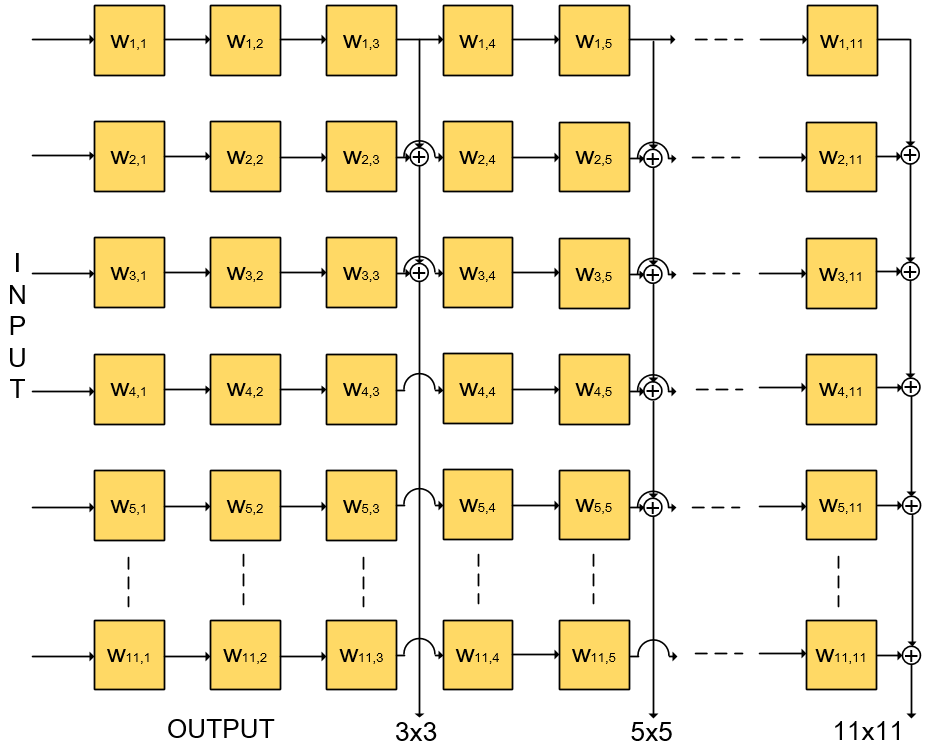}
	\caption{A universal SVPE array supporting AlexNet with multiple kernel sizes (3, 5, and 11), where the yellow blocks are Shift-Accumulator units.}
	\label{universal_SVPE} 
\end{figure}

We evaluate our SVPE cluster implementation using AlexNet, where our $n$-BQ-NN contains 16-bit activations and 3-bit weights. \textbf{Table}~\ref{n-bit} gives the evaluation results with the comparison of the state-of-the-art FPGA accelerators, where GOP indicates the unit of the number of operations. From the hardware perspective, we prefer to use the computational efficiency to describe the performance of the algorithm. Because the total operations of computing a network is fixed, we can get the execution time (s) by dividing the total operations (GOP) by the computational efficiency (GOP/s). Similar to the structure of \textbf{Fig}.\ref{SVPE_arry}, a universal SVPE array designed by the largest filter size of AlexNet is proposed in \textbf{Fig}.\ref{universal_SVPE}. This experiment will use the universal SVPE array that fits the full-size after a slight adjustment. Our array designs to be recycled when calculating the convolutions of different layers, and $11 \times 11$ filter of AlexNet is only used in the first convolutional layer, so most of the array utilization is extremely low. This also confirms the necessity of designing the network architecture with $3 \times 3$ unified filter to improve resource utilization as \textbf{Table}~\ref{TBQ-Net_structure}.

Compared to prior works~\cite{zhang2015optimizing,lu2017evaluating}, we improve the average CNN performance to 957.4 GOP/s where the work~\cite{lu2017evaluating} is implemented by Winograd algorithm. The baseline is to implement the same hardware architecture as our implementation. The only difference is that it uses VPE cluster because its weights and activations are both 16-bit. The computational efficiency of our implementation has improved by 2.9 times compared with the baseline, which is slightly less than 4.1 times based on the theoretical calculations of the Section V-B. On the other hand, Our implementation also improves the energy efficiency to 48.9 GOP/s/W. The better energy efficiency and resource efficiency come from the novel SVPE structure.

\renewcommand\arraystretch{1.5}
\begin{table}[h]
	\caption{Performance comparison for AlexNet}
	\label{n-bit} 
	\begin{center}
		\setlength{\tabcolsep}{1.5mm}{
		\begin{tabular}{|c|c|c|c|c|}
			\hline
			 & \cite{zhang2015optimizing} & \textbf{Baseline} & \cite{lu2017evaluating} & \textbf{Our Impl.} \\
			\hline
			\textbf{Precision} & 32bits fixed & 16bits fixed & 16bits fixed & 16bits fixed \\
			\hline
			\textbf{Device} & VX485T & ZCU102 & ZCU102 & ZCU102 \\
			\hline
			\textbf{Freq(MHz)} & 100 & 200 & 200 & 200 \\
			\hline
			\textbf{Logic cell(K)} & 485.7 & 600 & 600 & 600 \\
			\hline
			\textbf{DSP} & 2800 & 2520 & 2520 & 2520 \\
			\hline
			\textbf{BRAM(Kb)} & 2060$\times$18 & 1824$\times$18 & 1824$\times$18 & 1824$\times$18 \\
			\hline
			\hline
			\textbf{conv1(GOP/s)} & 27.5 & 227.5 & 409.6 & 410.5 \\
			\textbf{conv2(GOP/s)} & 83.8 & 535.8 & 1355.6 & 1744.3 \\
			\textbf{conv3(GOP/s)} & 78.8 & 655.9 & 1535.7 & 1680.7 \\
			\textbf{conv4(GOP/s)} & 77.9 & 634.4 & 1361.7 & 1739.4 \\
			\textbf{conv5(GOP/s)} & 77.6 & 559.5 & 1285.7 & 1456.1 \\
			\hline
			\textbf{\makecell*[c]{CNN average \\ (GOP/s)}} & 61.6 & 332.2 & 854.6 & 957.4 \\
			\hline
			\hline
			\textbf{Power(W)} & 18.6 & 28.7 & 23.6 & 19.6 \\
			\hline
			\textbf{\makecell*[c]{DSP Efficiency \\ (GOP/s/DSPs)}} & 0.022 & 0.131 & 0.339 & 0.381 \\
			\hline
			\textbf{\makecell*[c]{Logic cell \\ Efficiency \\ (GOP/s/cells/K)}} & 0.127 & 0.553 & 1.424 & 1.596 \\
			\hline
			\textbf{\makecell*[c]{Energy \\ Efficiency \\ (GOP/s/W)}} & 3.31 & 11.57 & 36.2 & 48.85 \\
			\hline
			\hline
			\textbf{DSP Utilization} & 80\% & 30\% & 63\% & 30\% \\
			\hline
			\textbf{LUT Utilization} & 61\% & 48\% & 39\% & 73\% \\
			\hline
			\textbf{FF Utilization} & 34\% & 42\% & 33\% & 68\% \\
			\hline
			\textbf{\makecell*[c]{BRAM \\ Utilization}} & 50\% & 50\% & 43\% & 83\% \\
			\hline
		\end{tabular}}
	\end{center}
\end{table}

\section{Conclusion and future work}

In this paper, we present a novel learning framework to quantize full-precision CNN models into low-precision QNN models whose weights are constrained to the power of two. We solve the problem of gradient vanishing by adding a reconstructed gradient function into back-propagation algorithm. To satisfy the network-structure-optimization requirements for hardware limitation, we propose $n$-BQ-NN, a novel QNN structure, to replace the multiplication with SHIFT operation whose structure is more suitable for the inference on FPGAs. Furthermore, we also design the SVPE array to replace all 16-bit multiplications with SHIFT operations in convolution operation on FPGAs. For proving the validity of our learning framework, we conduct experiments and show that the quantized models of ResNet, DenseNet and AlexNet through our learning framework can achieve almost the same accuracies with the original full-precision models. Moreover, when using our learning framework to train our $n$-BQ-NN from scratch, it can achieve nearly state-of-the-art results compared with typically low-precision QNNs. We also evaluate the computational efficiency and energy consumption by implementing our QNNs models on Xilinx ZCU102 platform. In our hardware experiments, our $n$-BQ-NN with our SVPE can execute 2.9 times faster than with the VPE in inference, and the use of SVPE array also reduces average energy consumption to 68.7\% of the VPE array with 16-bit. Our future work should explore how to decrease the accumulated quantization errors further when our learning framework is used on different CNNs structures.

% References

\bibliographystyle{Bibliography/IEEEtranTIE}
\bibliography{Bibliography/IEEEabrv,Bibliography/BIB_1x-TIE-2xxx}\ %IEEEabrv instead of IEEEfull

\end{document}